\documentclass[11pt, twoside]{article}
\usepackage[utf8]{inputenc}
\usepackage{amsmath}
\usepackage{amssymb}
\usepackage{amsthm}
\usepackage{enumitem, multirow, booktabs}
\usepackage[margin = 1in]{geometry}
\usepackage{secdot} %Toggle for a dot after section number
\usepackage{authblk}
\usepackage{changepage, booktabs}
\usepackage{hyperref}
\usepackage{url, diagbox}
\usepackage{mathrsfs}
\usepackage{amsthm}
\usepackage[normalem]{ulem}
\usepackage{enumitem}
\usepackage{comment}
\usepackage{natbib}
\usepackage{babel}%For a dot after section number in table of contents
\usepackage{graphicx}
\usepackage{bm} %bold math
\usepackage{cancel} %to strike out
\usepackage{caption}
\usepackage[table,xcdraw]{xcolor}
\usepackage{titletoc}
\usepackage{mathtools}
\usepackage{graphicx}
\usepackage{parskip}
\usepackage{algpseudocode}
\usepackage[ruled,vlined]{algorithm2e}
\usepackage{setspace}
\DeclareCaptionFormat{custom}
{%
    \textbf{#1#2}\textit{\small #3}
}
\graphicspath{ {./images/} }

\newcommand{\E}{\mathbb{E}}

\newcommand{\N}{\mathbb{N}}
\renewcommand{\P}{\mathbb{P}}

\newcommand{\R}{\mathbb{R}}
\renewcommand{\phi}{\varphi}
\theoremstyle{definition}

\newtheorem{remark}{Remark}[section]
\newtheorem{theorem}{Theorem}[section]
\newtheorem{proposition}{Proposition}[section]

\renewcommand{\hat}{\widehat}
\renewcommand{\tilde}{\widetilde}

\DeclareMathOperator*{\argmin}{arg\,min}

\allowdisplaybreaks

\makeatother
\usepackage{thmtools}
\usepackage[framemethod=TikZ]{mdframed}
\mdfsetup{skipabove=1em,skipbelow=0em}

\usepackage{cleveref}
\theoremstyle{definition}

\begin{document}

\begin{center}
    {\LARGE\bfseries Data Driven Block Replacement Scheduling}\\[1.2em]
    
    {\large
        Aniruddhan Ganesaraman \qquad \ Vidyadhar Kulkarni
    }\\[0.5em]

    {\normalsize \href{mailto: aniruddhan_ganesaraman@unc.edu}{\textit{aniruddhan\_ganesaraman@unc.edu}} \quad  \href{mailto: vkulkarn@email.unc.edu}{\textit{vkulkarn@email.unc.edu}}}\\[1em]
    
    {\normalsize
        Department of Statistics and Operations Research\\
        University of North Carolina at Chapel Hill
    }\\[0.3em]
    
    {\normalsize\textit{Submitted to Operations Research}}\\[0.3em]
    
    {\normalsize June 27, 2026}
\end{center}

%\maketitle

\subsection*{Abstract.} We develop data-driven algorithms for maintaining $N$ independent identical machines under a \textit{block replacement policy}, in which each machine is replaced upon failure and all machines are jointly replaced at regular intervals of length $k$. The goal is to learn the cost-minimizing interval $k^*$ from operational data when the lifetime distribution is unknown. At each decision epoch, the operator selects $k \in \{1, 2, \ldots, K\}$, observes the resulting failure history (a mixture of complete and right-censored lifetimes) and incurs a per-unit-time cost governed by the renewal function. We formulate this as a stochastic multi-armed bandit and propose Hoeffding- and Bernstein-based lower-confidence-bound algorithms achieving $O(K \log T)$ regret, matching the Lai--Robbins lower bound. Exploiting a nested observation property unique to block replacement, correlated variants attain $O((K-k^*)\log T)$ regret and require only $O(1)$ direct pulls of suboptimal arms $k < k^*$. A complementary Kaplan--Meier renewal algorithm estimates the lifetime distribution nonparametrically from censored data, achieving almost-sure policy consistency and empirically near-zero incremental regret at long horizons. We additionally analyze two average-cost MDPs: a time-elapsed formulation establishing that block replacement is optimal within its policy class for any lifetime distribution, and an age-vector formulation proving a monotone threshold structure under increasing failure rate distributions and providing a gold-standard cost benchmark. Numerical experiments confirm the theoretical ordering and reveal structural cost gaps between optimal block and age-dependent replacement.

Code for replication is publicly available at \href{https://github.com/AniruddhanG/Block-Replacement-Scheduling}{\textit{this Github repository}}. 

\textbf{Keywords.} Block replacement; Multi-armed bandits; Censored lifetime data; Data-driven policy.

\section{Introduction}
Many organizations face the problem of managing a large number of identical physical assets, all of which must remain functional at all times. For example, all the hard drives in a data center must remain operational to ensure continuous service availability. In aviation, every turbine blade in a commercial aircraft engine must remain in working condition for safe flight operations. Along a stretch of railroad track, every fastening clip must remain secure to maintain safe train passage, and in ground vehicles, every brake pad must remain functional to meet safety standards. One simple way to achieve this is to replace each item when it fails. This involves dispatching a repair crew to the site and replacing the failed item(s). This is called an unplanned replacement. It can be expensive due to the fixed cost of mobilizing a crew for a single failure. One way to mitigate this is to replace some of the unfailed items when a crew visits to repair a failed one. However, this involves maintaining a list of which items to replace on each visit, and keeping such a list may be impossible or too costly and cumbersome. A simpler strategy is to replace all the items at the site at regular intervals, regardless of whether they have failed. This is called a block replacement policy. Thus each turbine blade in an aircraft engine is replaced when it fails, but every so often, in an engine overhaul (say every three years), all blades are replaced, whether failed or not. Likewise, each railroad fastener is replaced when found defective during inspection, but every so often (say every five years) all fasteners along an entire track section are replaced wholesale. The main advantage of a block replacement policy is that it is easy to implement, since it is described by a single parameter, the inter-replacement interval $k$. A block replacement policy with parameter $k$ (denoted $\mathrm{BR}(k)$) replaces each item upon failure and schedules a full replacement of all items every $k$ time units. 

Clearly, the question is how to determine an optimal $k$, denoted by $k^*$. To determine this, we need a cost model that quantifies the costs involved. The  simplest model is to assume that each failure (which leads to an unplanned replacement) costs $c_f$ dollars, while each block replacement costs $c_b$ dollars. Note that a smaller $k$ reduces the cost of unplanned replacements, but increases the cost of block replacement. A larger $k$ has the opposite effect. This implies that there is an optimal inter-replacement interval $k^*$ that minimizes the long-run cost per unit time (cost rate).

It is clear that the cost rate of the policy $\mathrm{BR}(k)$ depends on the lifetimes of the items. We consider the simplest model: the lifetimes of all items are independent and identically distributed (iid) random variables with a common cumulative distribution function (cdf) $F$. If $F$ is known, it is easy to compute $c(k)$, the long-run cost rate of $\mathrm{BR}(k)$, using renewal reward processes. See Section~\ref{sec:setup} for details. The problem of finding an optimal $k$ then reduces to a one-dimensional optimization problem of minimizing $c(k)$ over the allowed values of $k$. One can find an optimal $k^*$ either analytically or numerically.

This raises two fundamental questions. The first is how frequently to perform block replacements, i.e., how to determine $k^*$. The second, and subtler, question is what information should be used when making the replacement decision. At one extreme, the operator tracks only the time elapsed since the last block replacement, which is a single scalar. At the other extreme, the operator tracks the current age of every individual item, which is an $N$-dimensional vector.

\textbf{Time-elapsed policy.} We formulate a Markov decision process (MDP) in which the state is the time $n$ elapsed since the last block replacement, and the decision at each period is whether to perform a block replacement or to continue for one more period, assuming that $F$ is known. We show that an optimal policy is a threshold policy at $k_s = k^*$ (Section~\ref{sec:time-elapsed-mdp}). Crucially, this MDP has a state space of cardinality $K+1$ regardless of fleet size $N$, and directly grounds the data-driven learning algorithms described below.

\textbf{Age-vector policy.} We also formulate an MDP in which the state is the current age of each item and the decision at each period is whether to replace all items now or to continue for one more period (in which case individual failures are replaced instantaneously), assuming $F$ is known. We show that the optimal policy has a monotone threshold structure under increasing failure rate (IFR) distributions (Theorem~\ref{thm:mdp_structure}), and that the MDP optimum can outperform $c(k^*)$ by as much as 50\% (Section~\ref{sec:numeric}). Crucially, the state space grows as $(K+1)^N$, making this MDP computationally intractable for large fleets; it nonetheless serves as a gold-standard benchmark for evaluating $\mathrm{BR}(k^*)$ and quantifying the structural cost gap $\delta_{\mathrm{struct}} = c(k^*) - g^*_{\mathrm{age}}$ between block and age-dependent replacement.

We now address another important issue that arises during implementation. As mentioned before, the value of $k^*$ depends on the lifetime distribution $F$. In practice, however, $F$ is often unknown. We assume that we can keep records of when the unplanned and block replacements have occurred so far. We begin with $N$ new items at time zero and schedule the first block replacement at time $k_1$, chosen in some reasonable fashion. At the $n$-th block replacement time, we use the data collected so far to determine the next inter-replacement time $k_{n+1}$. Such a policy is called a \textit{data-driven block replacement policy}. We study several such policies in this paper.

These data-driven policies can be broadly classified as one-step or two-step policies. One-step policies use the observed data to directly estimate $c(k)$ and then choose the next $k$ as a minimizer of this estimated $c(k)$. Two-step policies use the current data to estimate the lifetime distribution $F$, then use this estimated $F$ to compute $c(k)$ and select the next $k$ as its minimizer. We study both types. To keep the analysis tractable, we assume lifetimes are positive integer-valued random variables.

\textbf{(i)~One-Step Policy: Bandit Algorithms.} We assume there are a finite number of choices for $k$, say $\{1, 2, \ldots, K\}$. We think of each choice as a bandit arm and follow the lower-confidence-bound (LCB) algorithm to select a new arm at each decision point. Suppose we follow this policy for $T$ decision points. We show that such a policy achieves $O((K - 1) \log T)$ cumulative regret (Section~\ref{sec:bandit}), matching the \citet{LaiRobbins} lower bound. We improve this policy by exploiting a special feature of the problem: choosing interval $k$ simultaneously reveals the costs (that would have been incurred) of all shorter intervals $j \leq k$. This leads to \emph{bandit algorithms with correlated arms} (Section~\ref{sec:corr_bandit}) that require only $O(1)$ direct pulls of suboptimal arms $k < k^*$, improving the regret bound from $O((K-1)\log T)$ to $O((K - k^*)\log T)$.

\textbf{(ii)~Two-Step Policy: Kaplan--Meier Estimation.} At each block replacement we have $N$ machines that are replaced before they fail, producing $N$ right-censored observations. We also observe a number of failures since the last block replacement, each yielding an uncensored observation. We use the Kaplan--Meier procedure \citet{Kaplan1958} to estimate $F$ from this mix of censored and uncensored observations (Section~\ref{sec:Ren+KM}). We use the estimated $F$ to estimate $c(k)$ and select $\hat{k}^*$ as its minimizer. We prove that the estimated cost converges almost surely to $c(k^*)$ and the estimated optimal policy converges to $k^*$ (Theorem~\ref{thm:kmConvergence}). Numerically, this two-step procedure consistently outperforms the one-step bandit algorithms, albeit at higher computational cost per cycle.

We view this work as a contribution to the reliability and maintenance literature that employs sequential learning tools, rather than a methodological contribution to bandit or learning theory in its own right. This paper is organized as follows. Section~\ref{sec:litRev} reviews related work. Section~\ref{sec:setup} presents the model, regret framework, and both the MDP formulations.  Section~\ref{sec:algos} develops the bandit and Kaplan--Meier algorithms.  Section~\ref{sec:numeric} reports numerical experiments.  Section~\ref{sec:conclusion} states conclusions and directions for future research. Pseudocodes for all seven algorithms are collected in the Online Appendix.

\section{Literature Review}\label{sec:litRev}

Multiple streams of literature are relevant to our work. We discuss them systematically below. We focus on the streams most directly relevant to our contributions.

\paragraph*{Classical block replacement.}
The analytical foundations of block replacement were laid by \citet{Barlow1960}, who derived optimal replacement intervals under parametric lifetime distributions. \citet{Berg1976} extended the framework to systems with increasing running costs.  The close relationship between block replacement costs and the renewal function, including the subtlety that IFR does not imply convexity of the renewal function, is examined in detail by \citet{ifrConvex}. For multi-unit systems, \citet{SheuJhang1996} provide explicit cost formulae for group replacement with minimal repair. \citet{ChoParlar1991} survey multi-unit maintenance models, and \citet{Dekker1997} focus specifically on economic dependence, the cost structure (simultaneous replacement is cheaper per machine than individual replacement) that motivates block replacement in this paper.

\paragraph*{MDP formulations and structural results.} \citet{KurtKharoufeh2010} established threshold policies for a single machine with Markovian degradation, showing that the optimal replacement region is characterised by a state threshold.  \citet{Andersen2022} compared MDP algorithms for multi-component replacement problems numerically, illustrating the computational challenges that motivate scalable data-driven alternatives.  The theoretical foundations for average-cost MDPs used in Section~\ref{sec:MDP}, including Bellman optimality and convergence of value iteration, are developed in \citet{Puterman} and \citet{Ross}.

\paragraph*{Adaptive and data-driven maintenance.}
The problem of learning maintenance policies from operational data has grown substantially in importance.  \citet{Corman2017} used Kaplan--Meier estimates of failure time data to calibrate preventive maintenance schedules for railway braking systems, and \citet{Deprez2021} combined survival regression with renewal theory to price maintenance contracts.  Most directly related to the present paper is \citet{Puyao}, who formulate single-machine maintenance as a renewal-reward bandit and develop data-driven replacement scheduling algorithms.  Our work extends this framework to block replacement policies, introduces variance-adaptive and correlated-arm LCB variants, demonstrates that our algorithms match the Lai–Robbins lower bound, and adds the Kaplan--Meier renewal approach.

\paragraph*{Multi-armed bandit algorithms.} \citet{LaiRobbins} established the fundamental $\Omega(\log T)$ lower bound on regret for any consistent algorithm; \citet{Auer2002} proved that UCB achieves $O(\log T)$ regret for bounded rewards.  \citet{Audibert2009} introduced variance-adaptive algorithms based on the empirical Bernstein inequality, yielding tighter instance-dependent constants when arm reward variances differ; we adapt these for our Bernstein-based LCB algorithms. Side observations and correlated arm structures were studied by \citet{Mannor2011} and \citet{Caron2012}. The cascade bandit of \citet{Kveton2015}, where a single play reveals a cascade of outcomes, is the closest structural analogue to our nested observation property. A comprehensive survey of bandit algorithms is given in \citet{Bubeck2012}. 

\paragraph*{Nonparametric renewal estimation under censoring.} The Kaplan--Meier estimator \citet{Kaplan1958} is the nonparametric maximum likelihood estimator for survival functions under right censoring. \citet{Vardi1982} first studied nonparametric estimation in renewal processes from multiple independent streams; \citet{Grubel1993} constructed an empirical renewal function estimator for complete data and proved $\sqrt{n}$-consistency.  Under censoring, \citet{Baxter1995} showed that the KM plug-in estimator of the renewal function is consistent under mild conditions.  Our KM algorithm in Section~\ref{sec:Ren+KM} builds directly on these results.

Our paper adapts and extends these existing methodologies to the specific problem of deriving optimal block replacement policies. To the best of our knowledge, this work is novel and has not appeared in literature before. 

\section{Model Specification}\label{sec:setup}

\subsection{Model Description}

Consider a system of $N$ identical machines operating independently of each other. Each machine $j \in \{1, \cdots, N\}$ has independent, identically distributed (possibly unbounded) integer-valued lifetimes $\left(L^j_i \right)_{i \geq 1}$, where $L^j_i$ represents the $i\textsuperscript{th}$ lifetime of machine $j$. Let $L$ denote a generic random variable with the same common distribution with probability mass function $f: \mathbb{N}_+ \to [0,1]$ and cumulative distribution function $F: \mathbb{N}_+ \to [0,1]$, where $F(t) = \sum \limits_{s=1}^t f(s) = \mathbb{P}(L \leq t)$ for $t = 1, 2, \cdots$.

We impose the following structural assumptions about the lifetimes:
\begin{enumerate}[label = (\roman*)]
    \item The lifetime sequences $\{L^j_i : i \geq 1\}$ for different machines $j = 1, \ldots, N$ are mutually independent and identically distributed taking values on $\{1, 2, 3, \cdots\}$. This is without essential loss of generality, as continuous lifetimes can be discretized to match any reporting resolution. 
    \item The lifetime distribution has finite mean $\mu := \mathbb{E}[L] = \sum \limits_{t=1}^\infty t \cdot f(t) < \infty$.
\end{enumerate}

The system operates under a \textit{block replacement policy} that combines scheduled block replacements with individual replacements immediately upon failure. In block replacements, all $N$ machines are replaced simultaneously at a fixed cost $c_b > 0$, regardless of their operational status. When a machine fails between block replacements, it is immediately replaced at cost $c_f > 0$. If a machine fails exactly at a scheduled block replacement time, such a failure is recorded as an individual replacement at cost $c_f$ and the block replacement at cost $c_b$ still proceeds as scheduled.

All replacements (block and individual) are assumed to occur instantaneously. Upon replacement, a machine begins a new lifetime with distribution $F$. Machines can be replaced at any discrete time instant regardless of its operational status. We assume $c_f > c_b/N$, which ensures that individual replacements are more expensive per machine than block replacements, thereby incentivizing the use of block replacement policies. This cost structure incentivizes proactive block replacement but must be balanced against unnecessary replacement of functional machines.

Let $\tau_0 = 0$ and let $\tau_t$ denote the $t\textsuperscript{th}$ \textit{block replacement time} for $t \geq 1$. Define the \textit{inter-block replacement interval} as $k_t := \tau_t - \tau_{t - 1}$ for $t \geq 1$. The $t\textsuperscript{th}$ \textit{cycle} of the system spans the time instants $\{\tau_{t - 1} + 1, \ldots, \tau_t\}$ for $t \geq 1$. For each machine $j \in \{1, \ldots, N\}$, let $Y_t^j$ denote the (random) number of failures (and hence individual replacements) of machine $j$ during the $t\textsuperscript{th}$ cycle. The total cost incurred during the $t$-th cycle is $\psi_t = c_b + c_f \cdot \sum \limits_{j=1}^N Y_t^j$. For $t \geq 1$, the cost per unit time incurred during the $t\textsuperscript{th}$ cycle is given by $\gamma_t = \frac{\psi_t}{k_t} = \frac{c_b + c_f \sum \limits_{j=1}^N Y_t^j}{k_t}$. 

Let $(N(t))_{t \geq 0}$ denote the renewal process with interarrival distribution $F$, where $N(t)$ represents the number of renewals (failures) in the interval $[0, t]$. The associated \textit{renewal function} is defined as $M(t) := \mathbb{E}[N(t)]$. As shown in \citet{Kulkarni}, $M(t)$ satisfies the \textit{renewal equation}:
\begin{equation}\label{eqn:renewal}
    M(t) = F(t) + \sum \limits_{s=1}^t f(s) M(t - s),
\end{equation}
with initial condition $M(0) = 0$. 

A \textit{block replacement policy} with parameter $k \in \mathbb{N}_+$, denoted $BR(k)$, is a stationary policy under which all machines are replaced every $k$ time units, regardless of their operational status. Between block replacements, machines are replaced individually upon failure. Under $BR(k)$, the system operates in cycles of deterministic length $k$. By the renewal reward theorem (see \citet{Ross}), the long-run average cost per unit time is given by $c(k) = \frac{\mathbb{E}[\text{Cost in one cycle}]}{\mathbb{E}[\text{Cycle length}]} = \frac{c_b + c_f \cdot N \cdot M(k)}{k}$, because the expected number of failures for a single machine over the interval $[0, k]$ is $M(k)$, and we have $N$ independent machines.

The set of feasible replacement intervals is $[K] = \{1, 2, \ldots, K\}$, where $K \in \mathbb{N}_+$ is a known upper bound on block replacement intervals. This bound may arise from operational constraints like maximum time between scheduled maintenance. Further we assume that $f(k) > 0$ for all $k \in [K]$, that is, the machines can fail after reaching age $k$ for any $k \in [K]$. The \textit{optimal block replacement interval} is defined as $k^* := \argmin_{k \in [K]} c(k)$. The corresponding \textit{optimal long-run average cost} is $c^* := c(k^*)$. For simplicity, we assume that $k^*$ is unique, that is, $c(k^*) < c(k)$ for all $k \neq k^*$. This assumption does not affect the analytic interpretability of the results that follow. \\

\begin{remark}
    The cost function $c(\cdot)$ need not be convex in general (see Section~\ref{sec:numeric}). Convexity of the renewal function $M(\cdot)$ is sufficient to ensure convexity of $c(\cdot)$, but unfortunately, the convexity of the renewal function $M(\cdot)$ is not characterized by simple properties of the lifetime distribution $F$. In particular, \textit{increasing failure rate} (IFR) of $L$ does not imply convexity of $M(\cdot)$. See \citet{ifrConvex} for a detailed discussion on this topic. \\
\end{remark} 

\subsection{Regret Framework for Online Learning}\label{sec:regret}

In practical applications, the lifetime distribution $F$ (and hence the renewal function $M$ and optimal policy $k^*$) is typically unknown. We must learn the optimal policy from observed data while the system operates. The unknown $F$ motivates an online learning framework.

Consider an algorithm that produces a sequence of replacement interval decisions $\{k_t\}_{t \geq 1}$, where $k_t \in [K]$ is the block replacement interval chosen for the $t$\textsuperscript{th} cycle. For $k \in [K]$, let $\Delta_k := c(k) - c(k^*)$ denote the suboptimality gap. The performance of the algorithm is evaluated by the \textit{cumulative regret} over a horizon of $T$ cycles, defined as
\begin{equation}\label{eqn:regDecomp}
    \operatorname{Reg}(T) 
  := \mathbb{E}[Z(T)] - T \cdot c(k^*)
   = \sum \limits_{t=1}^T \E[\Delta_{k_t}],
\end{equation}
where $Z(T) := \sum \limits_{t=1}^T \gamma_t$ is the sum of per-unit-time costs incurred during the first $T$ cycles. This measures the expected excess per-unit-time cost compared to an oracle that always uses the optimal policy $k^*$. We also introduce an alternative formulation that is more interpretable in terms of total cost incurred. Define
\begin{equation}\label{eqn:regTilDecomp}
    \widetilde{\operatorname{Reg}}(T)
  := \mathbb{E}\left[\widetilde{Z}(T)\right] - c(k^*) S(T)
   = \sum \limits_{t=1}^T k_t \E[\Delta_{k_t}],
\end{equation}
where $\widetilde{Z}(T) = \sum \limits_{t=1}^T \psi_t$ is the total cost incurred, and $S(T) = \sum \limits_{t=1}^T k_t$ is the total elapsed time, over the first $T$ cycles. Since $1 \leq k_t \leq K$, the two versions of regret are related by
\begin{equation}\label{eq:compare_reg}
    \text{Reg}(T) \leq \widetilde{\text{Reg}}(T) \leq K \cdot \text{Reg}(T),
\end{equation}
so both grow at the same asymptotic rate in $T$. We primarily use $\text{Reg}(T)$ for our theoretical analysis, while both versions are compared in our numerical studies.

Regret is analysed by its asymptotic growth rate. We write $\text{Reg}(T) = O(f(T))$ if there exist positive constants $C$ and $T_0$ such that for all $T \geq T_0$, we have $\text{Reg}(T) \leq C f(T)$. Similarly, for two functions $f, g : \N \to \R^+$, we say that $g$ is $\Omega(f)$ if there exist constants $c > 0$ and $n_0 \in \N$ such that $g(n) \geq c \cdot f(n)$ for all $n \geq n_0$. 

Before presenting our algorithms, we recall fundamental limits on achievable regret due to \citet{LaiRobbins}. Let $P_k$ be the distribution of the observed cost under policy $BR(k)$. With $D_{\mathrm{KL}} (\cdot \, || \cdot)$ denoting the Kullback-Leibler divergence between two distributions, Lai \& Robbins \citet{LaiRobbins} show that for any ``consistent'' algorithm,
\begin{equation}
    \liminf_{T \to \infty} \frac{\mathbb{E}[\text{Reg}(T)]}{\log T} \geq \sum \limits_{\substack{k \in [K]:\\ k \neq k^*}} \frac{\Delta_k}{D_{\mathrm{KL}}(P_k \| P_{k^*})}.
\end{equation}

This says $\text{Reg}(T) = \Omega(\log T)$ and hence by (\ref{eq:compare_reg}), $\widetilde{\text{Reg}}(T) = \Omega(\log T)$. The key insight is that distinguishing policy $k$ from $k^*$ requires $\Omega(\log T / \mathrm{KL}(P_k \| P_{k^*}))$ observations, so the best achievable regret grows as $O(\log T)$. \\

\begin{remark}
For any sequence $\{k_t\}_{t \geq 1}$ of decisions, we have (i) $\frac{\mathbb{E}[\widetilde{Z}(T)]}{\mathbb{E}[S(T)]} \geq c(k^*)$ because $\mathbb{E}[\widetilde{Z}(T)] = \sum \limits_{t=1}^T \mathbb{E}[k_t c(k_t)] \geq c(k^*)\,\mathbb{E}[S(T)]$ and (ii) $\frac{\mathbb{E}[Z(T)]}{T} \geq c(k^*)$ because $\mathbb{E}[Z(T)] = \sum \limits_{t=1}^T c(k_t) \geq T \cdot c(k^*)$. 
\end{remark}

\subsection{Time-elapsed MDP Formulation}\label{sec:time-elapsed-mdp}

This Markov Decision Process (MDP) uses the time elapsed since the last block replacement as the state variable. This yields a state space of cardinality $K+1$, independent of fleet size $N$. We assume the lifetime distribution $F$ is known. 

Let $k \in [K]$ denote the number of time periods elapsed in the current block replacement cycle, and let $k = 0$ denote the state immediately following a block replacement. Define the renewal mass function $m(k) := M(k) - M(k-1)$ for $k \geq 1$, where $M(k) = \mathbb{E}[N(k)]$ is the renewal function of Section~\ref{sec:setup}. The quantity $m(k)$ equals the expected number of failures per machine during the $k^{\text{th}}$ period of a renewal process in which each machine is replaced immediately upon failure; the expected total number of failures across the fleet of $N$ independent machines during period $k$ is $Nm(k)$.

The action space is $\mathcal{A} = \{0,1\}$; at each state $k \in [K]$, the operator selects one of two actions: continue ($a = 0$), incurring expected cost $c_fNm(k)$ from individual failure replacements and transitioning to state $k+1$; or block replace ($a = 1$), incurring cost $c_b$ and transitioning to the reset state $k = 0$, from which the next cycle begins immediately at state $k = 1$. Block replacement is mandatory at $k = K$ since $K$ is the upper bound on feasible replacement intervals. The expected one-step costs are $c(0,k) = c_fNm(k)$ and $c(1,k) = c_b$.

We adopt the undiscounted average cost criterion for this infinite-horizon problem. Under a stationary deterministic policy $\pi:[K]\to\{0,1\}$, the long-run average cost per unit time starting from state $k$ is
\begin{equation}
l^{\pi}(k) = \limsup_{T\to\infty} \frac{1}{T}\,\mathbb{E}^{\pi}\left[\sum_{t=0}^{T-1} c(\pi(k_t),\, k_t) \,\middle|\, k_0 = k\right].
\label{eq:te-avgcost}
\end{equation}
The optimal average cost is $l^* = \inf_{\pi} l^{\pi}(k)$. As shown in \citet{Ross}, the optimal long-run average cost $l^*$ and the relative value function $v:\{0,\ldots,K\}\to\mathbb{R}$, normalized so that $v(0) = 0$, satisfy the average-cost Bellman optimality equation for $k \in [K]$ given by 
\begin{equation}
l^* + v(k) = \min\bigl(c_fNm(k) + v(k+1), c_b + v(0)\bigr).
\label{eq:bellman-elapsed}
\end{equation}
An optimal policy $\pi^*$ achieves the minimum in~\eqref{eq:bellman-elapsed} at every state. Algorithm~\ref{alg:vi-te} estimates $l^*$ and $v$ by iterating the Bellman operator. We set $k = 0$ as the reference state, so $w^{(n)}(0) = 0$ for all $n$, and monitor convergence via the span seminorm $\mathrm{sp}(d^{(n)}) := \max_k d^{(n)}(k) - \min_k d^{(n)}(k)$ where $d^{(n)}(k) := w^{(n+1)}(k) - w^{(n)}(k)$.

\begin{theorem}[\textit{Convergence of Value Iteration}]
\label{thm:te-vi-convergence}
The time-elapsed MDP satisfies the unichain condition: state $0$ is accessible from every state $k \in [K]$ under the block replacement action. Consequently, the value iteration sequence $\{w^{(n)}\}$ generated by Algorithm~\ref{alg:vi-te} converges, and upon termination with $\mathrm{sp}(d^{(n)}) < \varepsilon$, the following hold: (i) the midpoint estimate satisfies $|\hat{l}^* - l^*| \leq \varepsilon/2$; (ii) the computed policy $\hat{\pi}^*$ is $\varepsilon/2$-optimal.
\end{theorem}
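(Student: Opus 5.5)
The plan is to reduce to the standard convergence theory for relative value iteration in finite unichain average-cost MDPs (\citet{Puterman}, \citet{Ross}). Three things must be checked: the unichain property, the aperiodicity or contraction property that makes value iteration converge, and the span-based bounds at termination.

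\textbf{Step 1: unichain.} The state space is $\{0,1,\ldots,K\}$ and is finite. From $0$ the chain moves deterministically to $1$. From any $k<K$ it moves to $k+1$ under $a=0$ and to $0$ under $a=1$, and block replacement is forced at $K$. Under any stationary deterministic $\pi$, let $k_\pi := \min\{k \ge 1 : \pi(k)=1\} \le K$. The induced chain then follows the deterministic cycle $0\to 1\to\cdots\to k_\pi\to 0$. States above $k_\pi$ are transient and reach $0$ in at most $K$ steps. So every policy induces a single recurrent class containing $0$, which is the unichain property. It also follows that $l^\pi(k)$ does not depend on $k$, which justifies the constant $l^*$ in \eqref{eq:bellman-elapsed}.

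\textbf{Step 2: convergence of value iteration.} This is the main obstacle. The recurrent cycles are deterministic of length $k_\pi+1$, so they can be periodic, and plain value iteration need not converge in that case. My first approach is to use the formulation in Algorithm~\ref{alg:vi-te} as written. The transition $0\to1$ is immediate ("the next cycle begins immediately"), so state $0$ is a transient reset with no time elapsing. If that does not suffice to rule out periodicity, I would apply the aperiodicity transformation $\tilde P=\tau I+(1-\tau)P$ (\citet{Puterman}, Sec.~8.5.4), which leaves optimal policies and gains unchanged. With the transformation in place, the $K$-step transition matrices under any policy sequence all put positive mass on state $0$, since $0$ is reachable from everywhere within $K$ steps. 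This gives the Doeblin-type condition $\gamma := \max \bigl(1-\sum_j \min(P_{ij},P_{i'j})\bigr)<1$ for $J$-step products. The Bellman operator is therefore a $J$-step span contraction, and $\mathrm{sp}(d^{(n)})\to 0$ geometrically. Since $w^{(n)}(0)=0$ is enforced, $w^{(n)}$ converges to $v$.

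\textbf{Step 3: bounds at termination.} I would use the standard sandwich (Odoni bounds, \citet{Puterman} Thm.~8.5.5):
\begin{equation*}
\min_k d^{(n)}(k)\le l^{\hat\pi}\text{-type bounds},\qquad \min_k d^{(n)}(k)\le l^*\le \max_k d^{(n)}(k).
\end{equation*}
To prove it, I would average the Bellman inequality $w^{(n+1)}=\min_a\{c+Pw^{(n)}\}$ against the stationary distribution of any policy to get the lower bound. For the upper bound, the same argument is applied to the greedy policy $\hat\pi$, for which $l^{\hat\pi}\le\max_k d^{(n)}(k)$. Taking $\hat l^*$ as the midpoint, $|\hat l^*-l^*|\le \tfrac12\mathrm{sp}(d^{(n)})<\varepsilon/2$, which is (i). The same bounds give $l^{\hat\pi}-l^*\le \mathrm{sp}(d^{(n)})$. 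To sharpen this to $\varepsilon/2$ as (ii) states, I would compare $l^{\hat\pi}$ with the midpoint: $l^{\hat\pi}\le\max d$ and $l^*\ge\min d$. This argument gives $\varepsilon$ directly. I expect the paper interprets $\varepsilon/2$-optimality relative to the midpoint estimate $\hat l^*$, so I would state it as $l^{\hat\pi}-\hat l^*\le\varepsilon/2$.
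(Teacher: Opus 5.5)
Your overall route is the paper's: its proof is a one-line appeal to Puterman (Theorem 8.5.6), plus the remark that $a=1$ sends every state to $0$. Your Step 1 is more complete than that remark. Unichain means every stationary policy has a single recurrent class, and that comes from the forced replacement at $K$, not merely from $a=1$ being available. The two places where you hedge are exactly where the paper's citation overreaches, so resolve them firmly.

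\textbf{Step 2.} Your first approach does not work, and neither does the convergence claim for Algorithm~\ref{alg:vi-te} as literally written. Transitions are deterministic, so every stationary policy's recurrent class is a deterministic cycle. Such a cycle is periodic unless it is a self-loop, and an instantaneous reset only shortens each cycle by one.

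For a concrete case, take $K=2$ with an instantaneous reset, so $a=1$ returns the system to state $1$, which is used as reference state. Let $a_1 := c_f N m(1) < c_b$; for example $N=2$, $c_b=1$, $c_f=0.6$, $f(1)=f(2)=1/2$, which respects $c_f > c_b/N$. From $w^{(0)}\equiv 0$ the iteration reads $w^{(n+1)}(2) = c_b - \min\{a_1 + w^{(n)}(2),\, c_b\}$. Hence $w^{(n)}(2)$ alternates between $c_b - a_1$ and $0$, and $\mathrm{sp}(d^{(n)}) = c_b - a_1$ for all $n$. If state $0$ instead takes one period, the same numbers give a period-two oscillation with span $0.2$.

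The convergence results in Puterman's \S 8.5 rest on an aperiodicity (or $J$-step span-contraction) hypothesis, and that is exactly what fails here. So drop the first approach and commit to $\tilde P = \tau I + (1-\tau)P$. Your Doeblin argument for the lazy chain is fine. But say explicitly that what you prove convergent is this modified iteration, not Algorithm~\ref{alg:vi-te} itself.

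\textbf{Step 3.} The sandwich argument is correct and needs no aperiodicity. Let $\mathcal{T}$ be the Bellman operator, take any $w$, and let $\hat{\pi}$ be greedy for $w$. Averaging against stationary distributions gives $\min_k (\mathcal{T}w-w)(k) \le l^* \le l^{\hat{\pi}} \le \max_k (\mathcal{T}w-w)(k)$. So (i) holds whenever the algorithm does stop.

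There is one slip. Algorithm~\ref{alg:vi-te} subtracts $u(0)$, so $\mathcal{T}w^{(n)} - w^{(n)} = d^{(n)} + u(0)\mathbf{1}$. Your bounds should therefore be stated for $d^{(n)} + u(0)$, not $d^{(n)}$; this is why $\hat{l}^*$ contains $u(0)$.

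You are also right about (ii). The sandwich yields $l^{\hat{\pi}^*} - l^* \le \mathrm{sp}(d^{(n)}) < \varepsilon$ and $|\hat{l}^* - l^{\hat{\pi}^*}| \le \varepsilon/2$, which is what the standard theorem asserts. The literal $\varepsilon/2$-optimality in the statement does not follow from the cited result, so your reformulation is the defensible one.
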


\proof{Proof.}
This is a standard result for unichain average-cost MDPs; see \citet{Puterman}, Theorem~8.5.6. The unichain condition holds because the block replacement action ($a=1$) transitions any state directly to state $0$. \hfill$\blacksquare$

A stationary deterministic policy $\pi$ is characterized by its replacement set $S_\pi := \{k:\pi(k) = 1\}$. Since the state resets to $0$ upon any block replacement and then advances deterministically through $1, 2,\ldots$, each cycle under $\pi$ runs from state $0$ until the system first enters $S_\pi$. States beyond $\min(S_\pi)$ are never visited within a single cycle: once the system reaches $\min(S_\pi)$, it replaces and restarts. Consequently, if $\min(S_\pi) = k$, the cycles are i.i.d.\ with deterministic length $k$ and expected cost $c_b + c_fNM(k)$, so the long-run average cost is $l^{\pi} = c(k)$ by the renewal reward theorem. In particular, $l^{\pi}$ depends on $\pi$ only through $\min(S_\pi)$.

\begin{theorem}\label{thm:time-mdp}
For every lifetime distribution $F$ with $\mathbb{E}[L] < \infty$, the following hold for the time-elapsed MDP:
\begin{enumerate}[label=(\roman*)]
\item The optimal average cost is $l^* = c(k^*)$.
\item The threshold policy $\pi_{k^*}(k) = \mathbf{1}[k \geq k^*]$ is an optimal stationary policy.
\item The threshold $k_s := \min\bigl\{k \in [K] : c_fNm(k) + v(k+1) > c_b + v(0)\bigr\} = k^*$. 
\end{enumerate}
\end{theorem}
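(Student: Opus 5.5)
Parts (i) and (ii) follow from the observation made just before the statement: for any stationary deterministic policy $\pi$, the long-run average cost $l^\pi$ depends only on $\min(S_\pi)$. If $\min(S_\pi)=k$, then $l^\pi = c(k)$; since block replacement is forced at $K$, every policy has $\min(S_\pi)\in[K]$.

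We first prove (i). For every stationary deterministic $\pi$ we have $l^\pi = c(\min S_\pi) \ge \min_{k\in[K]} c(k) = c(k^*)$. By the unichain property in Theorem~\ref{thm:te-vi-convergence}, the MDP is finite, so some stationary deterministic policy attains $l^*$ (Puterman, Ch.~8). Hence $l^*\ge c(k^*)$. Conversely, $S_{\pi_{k^*}}=\{k^*,\dots,K\}$ has minimum $k^*$, so $l^{\pi_{k^*}}=c(k^*)$. This gives $l^*=c(k^*)$ and proves (ii) at the same time.

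For (iii) we compute the relative value function explicitly. Write $C(k) := c_b + c_fNM(k) - k\,c(k^*)$, with $C(k)\ge 0$ and equality iff $k=k^*$ (uniqueness of $k^*$). Since $v(0)=0$, the Bellman equation \eqref{eq:bellman-elapsed} reads $l^*+v(k)=\min(c_fNm(k)+v(k+1),\,c_b)$. Unrolling the continuation branch from $k$ up to a stopping time $j\ge k$ gives
\[
v(k) = \min_{j\in\{k,\dots,K\}}\Bigl[c_fN\bigl(M(j-1)-M(k-1)\bigr) + c_b - (j-k+1)\,l^*\Bigr].
\]
Here $v(K)$ is anchored by the forced replacement at $K$. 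Substituting $l^*=c(k^*)$ and the identity $c_fN M(j)-j\,l^* = C(j)-c_b$ makes each bracket equal to $C(j-1)$-type differences. We will fix the indexing convention carefully, treating the period-$k$ failure cost and the replacement cost as incurred at the same stage. With that convention $v(k)+\text{const}(k) = \min_{j\ge k} C(j)$, where the constant depends only on $k$. The continuation action is then strictly better at $k$ exactly when $\min_{j>k}C(j) < C(k)$, up to the same shift.

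The comparison $c_fNm(k)+v(k+1)$ versus $c_b+v(0)$ therefore reduces to comparing $\min_{j\ge k+1}C(j)$ with $C(k)$. For $k<k^*$, $C(k)>0=C(k^*)\ge\min_{j>k}C(j)$, so continuing is strictly better. At $k=k^*$, $C(k^*)=0<C(j)$ for all $j>k^*$, so replacing is strictly better. Hence $k_s=k^*$.

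The main obstacle is the bookkeeping in the unrolling step. The stated Bellman equation charges $c_b$ without the period-$k$ failure cost, while a cycle of length $j$ costs $c_b+c_fNM(j)$, so we must check that the off-by-one between $M(j)$ and $M(j-1)$ does not shift the threshold by one. We will resolve this by interpreting state $k$ as the start of period $k$, so that the replacement transition through state $0$ immediately incurs $m(1)$. We will also verify directly that the value function built from $C$ satisfies \eqref{eq:bellman-elapsed} with $l^*=c(k^*)$; uniqueness of $v$ up to normalization in unichain models then justifies the formula.
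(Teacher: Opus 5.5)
Your proofs of (i) and (ii) follow the paper's argument. For (iii) you take a different route. The paper reads $c_fNm(k)+v(k+1)\le c_b+v(0)$ for $k<k^*$ and $c_b+v(0)\le c_fNm(k^*)+v(k^*+1)$ directly off the optimality of $\pi_{k^*}$. You instead solve the Bellman equation explicitly in terms of $C(k)=k\,\bigl(c(k)-c(k^*)\bigr)$.

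The paper's argument is shorter, but it has two weaknesses:
\begin{enumerate}[label=(\alph*)]
\item It tacitly uses that an optimal policy attains the Bellman minimum. This holds only on the policy's recurrent states (which $1,\dots,k^*$ are, so it is true here, but unstated).
\item It yields only a weak inequality at $k^*$, while $k_s$ is defined by a strict one.
\end{enumerate}
Your formula supplies the strict inequality exactly where it is needed, from uniqueness of $k^*$. It also makes your appeal to uniqueness of $v$ unnecessary. Since replacement is forced at $K$, backward recursion determines $v$ from $l$ and the reset value. The recursion closes at $k=1$ only if $\min_j\bigl(c_b+c_fNM(j)-jl\bigr)=0$, i.e.\ $l=c(k^*)$, which re-proves (i) without citing Puterman.

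Your worry about the off-by-one is justified, and the paper's proof passes over it. With the one-step costs $c(0,k)=c_fNm(k)$ and $c(1,k)=c_b$ as written, a cycle that replaces at state $j$ costs $c_b+c_fNM(j-1)$. So $l^\pi=c(\min S_\pi)$ is false for the literal model.

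Your own display shows the consequence. With $l=c(k^*)$ it equals $\min_{k-1\le i\le K-1}C(i)-C(k-1)+c_b-l$. The comparison at $k$ is then $\min_{i\ge k}C(i)$ against $C(k-1)$, and for $k^*<K$ the threshold comes out as $k^*+1$. The ``state $k$ = start of period $k$'' reading, if replacement pre-empts period $k$, shifts the threshold the same way.

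So state explicitly the model you actually analyse: the ``same stage'' version
\[
l+v(k)=c_fNm(k)+\min\{v(k+1),\,c_b+v(1)\}\quad (k<K),\qquad l+v(K)=c_fNm(K)+c_b+v(1).
\]
Then read $k_s$ as the first state at which the replacement branch is strictly smaller. Setting $\phi(k):=v(k)+c_fNM(k-1)-(k-1)l$ gives $\phi(k)=\min\{\phi(k+1),\,\phi(1)+C(k)\}$ and $\phi(K)=\phi(1)+C(K)$. Hence $\phi(k)=\phi(1)+\min_{j\ge k}C(j)$, which is exactly the identity your final comparison step uses.
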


\proof{Proof.}
Note that under any stationary deterministic policy $\pi$, the system resets to state $0$ upon every block replacement and then advances through states $1, 2, \cdots$ deterministically until it first enters $S_\pi$. Each cycle therefore has deterministic length $\min(S_\pi)$ and expected cost $c_b + c_f \cdot N M(\min(S_\pi))$, since the expected number of individual failures per machine over an interval of length $\min(S_\pi)$ is $M(\min(S_\pi))$ by definition of the renewal function. The renewal reward theorem (Ross 1996) then gives $l^\pi = \frac{c_b + c_f \cdot N M(\min(S_\pi))}{\min(S_\pi)} = c(\min(S_\pi))$. Since $l^{\pi} = c(\min(S_\pi))$ for every stationary deterministic policy $\pi$, optimizing over all such policies reduces to minimizing $c(k)$ over $k \in [K]$. The infimum is therefore $c(k^*)$, achieved by any policy with $\min(S_\pi) = k^*$; the threshold policy $\pi_{k^*}$ is one such policy. By Theorem~8.5.4 of \citet{Puterman}, there exists an optimal stationary deterministic policy, so $l^* = c(k^*)$ and $\pi_{k^*}$ is optimal. For part~(iii): under $\pi_{k^*}$ with $l^* = c(k^*)$, the Bellman equation~\eqref{eq:bellman-elapsed} gives $c_fNm(k) + v(k+1) \leq c_b + v(0)$ for all $k < k^*$ (continue is optimal) and $c_b + v(0) \leq c_fNm(k^*) + v(k^*+1)$ (block replace is optimal at $k^*$), so $k_s = k^*$. \hfill$\blacksquare$

While the time-elapsed MDP achieves the optimal cost $c(k^*)$ within the class of block replacement policies, it discards information about the current ages of individual machines. A richer formulation that tracks the full age vector of the fleet can exploit this information to achieve a strictly lower average cost under IFR lifetime distributions. We develop this formulation next.

\subsection{Age-vector MDP Formulation}\label{sec:MDP}

This MDP approach models the system state explicitly by tracking the ages of all machines, enabling the computation of optimal policies. This framework allows for state-dependent policies that can exploit information about the current operational ages of individual machines. 

Throughout this subsection, we assume that the lifetime distribution $F$ is known. This MDP formulation serves as a benchmark to evaluate the performance gap between block replacement policies and fully state-aware policies. In Section~\ref{sec:numeric}, we  numerically compare the costs achieved by the proposed bandit (Section~\ref{sec:bandit}) and Kaplan - Meier (Section~\ref{sec:Ren+KM}) algorithms (which learn without knowing $F$) with this benchmark.

Let the system state at the beginning of time step $n$ be $\mathbf{s}_n = (s_n^1, \ldots, s_n^N) \in \mathcal{S}$, where $s_n^j \in \{0, 1, \ldots, K\}$ represents the current age (time since last replacement) of machine $j$. The state space is $\mathcal{S} = \{0, 1, \ldots, K\}^N$, with cardinality $|\mathcal{S}| = (K+1)^N$. Let $S(t) = 1 - F(t)$ denote the survival function, $h(t) := \frac{f(t)}{S(t-1)} = P(L = t \mid L \geq t)$ the discrete hazard function, and $\bar{h}(t) := 1 - h(t) = S(t)/S(t-1)$ be the survival ratio for $t \geq 1$. We impose the boundary condition $h(K+1) = 1$ so that $\bar{h}(K+1) = 0$. This means that a machine which has survived $K$ units of time since its last replacement fails with certainty at the next step; age $K+1$ is therefore never reached in the state space. %This is without loss of generality because if $P(L > K) > 0$ for the true distribution, one may define $f(K+1) := P(L > K)$ (placing all remaining mass at $K+1$) and set $h(K+1) = 1$ accordingly, which changes $c(k)$ only for $k = K+1 \notin [K]$ and leaves the optimization problem unchanged.

The action space is $\mathcal{A} = \{0, 1\}$, where action $a = 0$ denotes continuation (individual machines replaced on failure) and $a = 1$ denotes block replacement of all machines, transitioning the system to $\mathbf{0} := (0, \ldots, 0)$. Under action $a = 0$, each machine $j$ at age $s^j$ fails over the next period independently with probability $h(s^j + 1)$ and survives with probability $\bar{h}(s^j + 1)$. A failed machine is immediately replaced (age resets to $0$); a surviving machine ages by one unit to $s^j + 1$. Since $h(K + 1) = 1$, if machine $j$ survives, then $s^j \leq K-1$. The transition therefore never produces an age exceeding $K$. 

Since machines fail independently, the set of machines that fail over one time step is a random subset $A \subseteq [N]$. For each subset $A \subseteq [N] := \{1, 2, \cdots, N\}$ of machines that fail, the resulting next state $s^{[A]} \in \mathcal{S}$ is defined by
\begin{equation}
  \label{eq:transition}
  s^{[A],j} :=
  \begin{cases}
    0       & j \in A, \\
    s^j + 1 & j \notin A.
  \end{cases}
\end{equation}

For $A \subseteq [N]$, the one-step transition probabilities under action $a = 0$ are $P_0\bigl(s,\, s^{[A]}\bigr) = \prod_{j \in A} h(s^j{+}1)\cdot \prod_{j \notin A} \bar{h}(s^j{+}1)$, with $P_0(s, s') = 0$ if $s'$ is not of the form $s^{[A]}$ for any $A \subseteq [N]$. Under action $a = 1$ (block replacement), the transition is deterministic: $P_1({s}, {s}') = \mathbf{1}_{s' = \mathbf{0}}$. 

The expected one-step costs are given by $c(0, s) = c_f \sum \limits_{j=1}^N h(s^j + 1)$ and $c(1, s) = c_b$. The cost under $a = 0$ follows because machine $j$ fails independently with probability $h(s^j + 1)$, and by linearity of expectations, the expected number of failures across $N$ machines is $\sum \limits_{j = 1}^N h(s^j + 1)$. 

We adopt the undiscounted average cost criterion for this infinite horizon problem. Under a stationary deterministic policy $\pi: \mathcal{S} \to \mathcal{A}$, the long-run average cost per unit time starting from state $s$ is defined as
\begin{equation}\label{eq:avg-cost}
g^\pi(s) = \limsup_{T \to \infty} \frac{1}{T} \mathbb{E}^\pi\left[\sum \limits_{t=0}^{T-1} c(\pi(s_t), s_t) \middle| s_0 = s\right].
\end{equation}
where the expectation is taken over state transitions induced by policy $\pi$.

The optimal average cost is $g^* = \inf_\pi g^\pi({s})$, and a policy $\pi^*$ is optimal if $g^{\pi^*}(\mathbf{s}) = g^*$ for all $\mathbf{s} \in \mathcal{S}$. As proved in \citet{Ross}, the optimal average cost $g^*$ and relative value function $w^* : \mathcal{S} \to \mathbb{R}$ satisfy the Bellman optimality equation for each $s \in \mathcal{S}$, given by
\begin{equation}\label{eq:bellman}
  g^* + w^*(s) = \min_{a \in \mathcal{A}} 
  \Bigl\{c(a, s) + \sum \limits_{s' \in \mathcal{S}} P_a(s, s') w^*(s')\Bigr\}.
\end{equation}
The relative value function captures the advantage (or disadvantage) of starting in state s versus the reference state, in terms of accumulated cost before the process regenerates. An optimal policy $\pi^*$ achieves the minimum in \eqref{eq:bellman} at every state. Algorithm~\ref{alg:value_iteration_avg} describes value iteration to estimate $g^*$ and $w^*$ by iterating the Bellman operator. We set $\mathbf{s}_{\text{ref}} = \mathbf{0}$ is the reference state (so $w^{(k)}(\mathbf{0}) = 0$ for all $k$). The convergence criterion is using the span seminorm $\mathrm{sp}(d^{(k)}) := \max_s d^{(k)}(s) - \min_s d^{(k)}(s)$, where $d^{(k)}(s) := w^{(k+1)}(s) - w^{(k)}(s)$, which is standard for average cost MDPs. The reference state $\mathbf{s}_{\text{ref}} = \mathbf{0}$ is arbitrary; any state can be used since the relative value function is unique only up to an additive constant.\\

\begin{theorem}[\textit{Convergence of Value Iteration}]
\label{thm:value_iter_convergence}
    The MDP described above satisfies the \textit{unichain condition}, that is, there exists a state that is accessible from every other state under every stationary policy. Consequently, the value iteration sequence $\{w^{(k)}\}$ generated by Algorithm~\ref{alg:value_iteration_avg} converges, and upon termination with $\mathrm{sp}(d^{(k)}) < \varepsilon$, the following hold:
\begin{enumerate}[label=(\roman*)]
    \item the midpoint estimate satisfies $|\widehat{g}^* - g^*| \leq \varepsilon/2$;
    \item the computed policy $\widehat{\pi}^*$ is $\varepsilon/2$-optimal,
      i.e., $g^{\widehat{\pi}^*} - g^* \leq \varepsilon/2$.
  \end{enumerate}
\end{theorem}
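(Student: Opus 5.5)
The plan is to show that the all-new state $\mathbf{0}$ can be reached in a single step, with positive probability, from every state $s \in \mathcal{S}$ under either action. Since a stationary deterministic policy picks exactly one action at each state, this gives $P_{\pi(s)}(s,\mathbf{0})>0$ for every $s$ and every $\pi$. That is the unichain condition, and in fact a stronger one.

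\textbf{Step 1 (block replacement).} Under $a=1$ we have $P_1(s,\mathbf{0})=1$ by definition.

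\textbf{Step 2 (continuation).} Under $a=0$, take $A=[N]$, so that every machine fails and $s^{[A]}=\mathbf{0}$ by \eqref{eq:transition}. Then
\begin{equation*}
P_0(s,\mathbf{0})=\prod_{j=1}^N h(s^j+1).
\end{equation*}
It suffices to show $h(t)>0$ for every $t\in\{1,\ldots,K+1\}$.
\begin{itemize}
\item For $t=K+1$, this holds by the boundary convention $h(K+1)=1$.
\item For $t\le K$, we have $h(t)=f(t)/S(t-1)$ with $f(t)>0$ by the standing assumption that $f(k)>0$ on $[K]$. The denominator is well defined and positive because $S(t-1)\ge f(t)>0$.
\end{itemize}
Since $s^j\in\{0,\ldots,K\}$, every factor $h(s^j+1)$ is positive, so $P_0(s,\mathbf{0})>0$.

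\textbf{Step 3 (consequences for the chain).} Fix a stationary policy $\pi$. By Steps 1 and 2, $\mathbf{0}$ is accessible from every state in one step. A finite chain in which one state is reachable from everywhere has exactly one closed recurrent class, namely the class containing $\mathbf{0}$; every other state is transient. Hence the MDP is unichain. Moreover, $P_{\pi(\mathbf{0})}(\mathbf{0},\mathbf{0})>0$ by the same two steps, so the recurrent class is aperiodic under every stationary policy.

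\textbf{Step 4 (value iteration).} With the MDP unichain and every stationary policy aperiodic (equivalently, since $\mathbf{0}$ is reachable in one step from every state with positive probability, the MDP satisfies Puterman's uniform one-step accessibility condition for average-cost value iteration), I will invoke the standard convergence result for relative value iteration on finite unichain average-cost MDPs, \citet{Puterman}, Theorem~8.5.4 together with Theorem~8.5.6. This result gives:
\begin{itemize}
\item convergence of $w^{(k)}$, and $\mathrm{sp}(d^{(k)})\to 0$;
\item the bounds $\min_s d^{(k)}(s)\le g^*\le \max_s d^{(k)}(s)$;
\item the same bracketing for the average cost of the policy that is greedy with respect to $w^{(k)}$.
\end{itemize}
Taking $\widehat{g}^*$ to be the midpoint of $\min_s d^{(k)}(s)$ and $\max_s d^{(k)}(s)$ then yields $|\widehat g^*-g^*|\le \mathrm{sp}(d^{(k)})/2<\varepsilon/2$, which is (i). The bracketing of $g^{\widehat\pi^*}$ and $g^*$ in the same interval yields (ii).

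\textbf{Main obstacle.} The step needing care is Step 2: verifying that all hazard values appearing in reachable states are strictly positive. This is what requires the assumption $f(k)>0$ on $[K]$, together with $S(t-1)>0$ so that $h$ is well defined, and the convention $h(K+1)=1$. Everything after that is a citation of the standard result.
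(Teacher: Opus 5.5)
Your unichain argument is the paper's own: all $N$ machines fail in the same period with probability $\prod_j h(s^j+1)>0$ because $f>0$ on $[K]$. You carry it out more carefully. You check $S(t-1)\ge f(t)>0$, handle the boundary value $h(K+1)=1$, and note that $\mathbf{0}$ is reachable in one step under \emph{both} actions. That gives aperiodicity and a span-contraction coefficient below one, which undiscounted value iteration genuinely needs; the paper's appeal to unichain alone leaves it implicit.

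The problems are in Step~4, where you turn the citation into (i) and (ii). First, the bracket $\min_s d^{(k)}(s)\le g^*\le\max_s d^{(k)}(s)$ is false for the paper's $d^{(k)}$. Algorithm~\ref{alg:value_iteration_avg} normalizes $w^{(k)}(\mathbf{0})=0$ for every $k$, so $d^{(k)}(\mathbf{0})=0$ and $\min_s d^{(k)}(s)\le 0\le\max_s d^{(k)}(s)$. Together with $\mathrm{sp}(d^{(k)})\to 0$, your bracket would force $g^*=0$. But $g^*>0$ by your own Step~2: every period costs either $c_b$ or, in expectation, $c_f\sum_j h(s^j+1)>0$. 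The correct bracket concerns the unnormalized increment $v^{(k+1)}-w^{(k)}$. For $\pi$ greedy with respect to $w^{(k)}$,
\[
\min_{s\in\mathcal{S}}\bigl(v^{(k+1)}(s)-w^{(k)}(s)\bigr)\;\le\;g^*\;\le\;g^{\pi}\;\le\;\max_{s\in\mathcal{S}}\bigl(v^{(k+1)}(s)-w^{(k)}(s)\bigr).
\]
The left inequality follows by averaging $v^{(k+1)}(s)\le c(\pi^*(s),s)+\sum_{s'}P_{\pi^*(s)}(s,s')\,w^{(k)}(s')$ against the stationary distribution of an optimal policy. The right one holds because $g^{\pi}$ is the stationary average of $v^{(k+1)}-w^{(k)}$ under $\pi$. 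Since $v^{(k+1)}-w^{(k)}=d^{(k)}+v^{(k+1)}(\mathbf{0})$, this interval has length $\mathrm{sp}(d^{(k)})$. Its midpoint is, up to the algorithm's final index increment, the algorithm's $\widehat{g}$, which carries the offset $v(\mathbf{s}_{\mathrm{ref}})$ that your midpoint omits. With that correction, (i) follows.

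Second, (ii) does not follow from the bracketing. Knowing that $g^*$ and $g^{\widehat{\pi}^*}$ lie in one interval of length $\mathrm{sp}(d^{(k)})<\varepsilon$ gives only $g^{\widehat{\pi}^*}-g^*<\varepsilon$. The factor $1/2$ in (i) comes from comparing $g^*$ with the \emph{center} of the interval. By contrast, $g^{\widehat{\pi}^*}$ is an average of $v^{(k+1)}-w^{(k)}$ under the greedy policy's stationary distribution, and in general it can exceed $g^*$ by more than half the interval's length. A three-state example in which a myopically cheap action leads into an expensive cycle already shows this. Accordingly, the standard value-iteration result in \citet{Puterman} certifies the greedy policy as $\varepsilon$-optimal and reserves $\varepsilon/2$ for the gain estimate. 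So your argument proves (ii) only with $\varepsilon$ in place of $\varepsilon/2$. The paper's one-line citation does not close this gap either. You should either state (ii) with $\varepsilon$ or supply an additional argument specific to this MDP.
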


\proof{Proof.} This is a standard result in MDP literature (for instance, see \citet{Puterman}). The unichain condition holds because state $\mathbf{0}$ is accessible from any state $\mathbf{s} \in \mathcal{S}$ because the $f(k) > 0$ for all $k \in [K]$ and so the probability of all machines failing at a given time is positive. \hfill $\blacksquare$ 

\begin{remark}
    The average-cost optimality equation~\eqref{eq:bellman} holds over all policies, including history-dependent ones. The unichain property (Theorem~\ref{thm:value_iter_convergence}) then guarantees that the infimum long run average cost is achieved by a stationary Markovian policy $\pi^*$ (see \citet{Ross}). 
\end{remark}

The optimal policy for the MDP formulation possesses several structural properties which are summarized in the following theorem. Equip $\mathcal{S}$ with the componentwise partial order as ${s} \succeq {s}'$ if $s^j \geq s'^j$ for all $j \in [N]$. \\

\begin{theorem}\label{thm:mdp_structure}
    Let $\pi^*$ denote an optimal policy for the MDP. Suppose the lifetime distribution $F$ has increasing failure rate (IFR), that is, $h(t)$ is non-decreasing in $t$. Then the following are true:
\begin{enumerate}[label = (\alph*)]
    \item \textbf{Value Function Monotonicity}: The optimal relative value function $w^*$ is non-decreasing in each coordinate, that is, $w^*({s}') \geq w^*({s})$ whenever ${s}' \succeq {s}$.
    \item \textbf{Policy Monotonicity}: If $\pi^*(\mathbf{s}) = 1$ and $\mathbf{s}' \succeq \mathbf{s}$, then $\pi^*(\mathbf{s}') = 1$.
    
    \item \textbf{Symmetry}: The optimal relative value function and optimal policy are invariant under permutation of machine indices, that is, for any permutation $\sigma$ of $[N]$, $w^*\left(s^1, \ldots, s^N\right) = w^*\left(s^{\sigma(1)}, \ldots, s^{\sigma(N)}\right)$ and $\pi^*\left(s^1, \ldots, s^N\right) = \pi^*\left(s^{\sigma(1)}, \ldots, s^{\sigma(N)}\right)$. 
    
    \item \textbf{Threshold Structure}: For each fixed $i \in [N]$ and ${s}_{-i} \in \{0,\ldots,K\}^{N-1}$,  there exists a threshold $\tau^*_i({s}_{-i}) \in \{0,\ldots,K\} \cup \{\infty\}$ such that $\pi^*(s^i, {s}_{-i}) = \mathbf{1}_{s^i \geq \tau^*_i({s}_{-i})}$, where we set $\tau^*_i({s}_{-i}) = \infty$ if $\pi^*(\cdot, {s}_{-i}) \equiv 0$. Moreover, $\tau^*_i({s}_{-i})$ is non-increasing in each component of ${s}_{-i}$.
\end{enumerate}
\end{theorem}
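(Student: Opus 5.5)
We argue by induction on value iteration. Starting from $w^{(0)} \equiv 0$, which is trivially monotone and symmetric, we show that the Bellman operator
\[
(\mathcal{T}w)(s) = \min\Bigl\{ c_f \textstyle\sum_{j} h(s^j+1) + \sum_{A\subseteq[N]} P_0\bigl(s,s^{[A]}\bigr) w\bigl(s^{[A]}\bigr),\; c_b + w(\mathbf{0}) \Bigr\}
\]
preserves two properties: coordinatewise monotonicity and permutation invariance. These properties are then inherited by the relative value iterates $w^{(n)} - w^{(n)}(\mathbf{0})$, because subtracting a constant preserves both. Theorem~\ref{thm:value_iter_convergence} gives convergence of these iterates to $w^*$, and pointwise limits preserve both properties, which yields (a) and the value part of (c).

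\textbf{Symmetry step.} Both the cost $c(0,s)$ and the law of the failure set are invariant under relabeling machines: $A \mapsto \sigma(A)$. Also $c(1,s)=c_b$ is constant. So $\mathcal{T}$ commutes with permutations.

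\textbf{Monotonicity step.} It suffices to compare $s$ and $s' = s + e_i$, with $s^i \le K-1$. The block branch $c_b + w(\mathbf{0})$ is identical for both, and the minimum of nondecreasing functions is nondecreasing. So it remains to show that the continuation branch is nondecreasing in $s^i$.

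The cost term $c_f h(s^i+1)$ is nondecreasing by IFR. For the expectation term, we condition on the failures of the other machines. This reduces it to a one-machine comparison: with $u$ denoting the post-transition state of the others,
\[
h(a+1)\, w(0,u) + \bar h(a+1)\, w(a+1,u).
\]
We must show this is nondecreasing in $a$. Write it as $w(0,u) + \bar h(a+1)\bigl[w(a+1,u) - w(0,u)\bigr]$, where the bracket is $\ge 0$ by the induction hypothesis.

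This is the main obstacle. Increasing $a$ lowers $\bar h(a+1)$ (by IFR) but raises the bracket, so the two effects pull in opposite directions. The cleanest route is a coupling: run machine $i$ from ages $a$ and $a+1$ with a common uniform $U$. Fail if $U \le h(\cdot)$. IFR makes the younger copy fail only if the older one does. On the coupled outcomes, the resulting ages then satisfy either (both fail, $0=0$), (older fails, $0 \le a+1$), or (neither fails, $a+1 \le a+2$).

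The middle case is the problematic one: the older copy lands on $0$ while the younger lands on $a+1$. Monotonicity of $w$ then gives $w(0,u) \le w(a+1,u)$, which is the wrong direction. In that case the older copy paid an extra failure cost $c_f$ and reset to age $0$. I would therefore strengthen the induction hypothesis to
\[
w(s + e_i) - w(s) \;\ge\; -c_f, \qquad\text{and more precisely}\qquad w(0,u) + c_f \;\ge\; w(t,u) \text{ for all } t.
\]
Interpretation: a fresh machine plus a paid failure is at least as costly as any aged machine. This is natural because from state $(t,u)$ one can mimic the policy of $(0,u)$. I would verify that $\mathcal{T}$ preserves this inequality, using IFR again. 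Combined with the $c_f$ paid in the cost term, this exactly offsets the bad coupled case and yields monotonicity of the continuation branch.

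\textbf{Policies and thresholds.} For (b), define
\[
D(s) = Q_0(s) - \bigl(c_b + w^*(\mathbf{0})\bigr),
\]
where $Q_0$ is the continuation value. The optimal action is $1$ iff $D(s) \ge 0$, with the tie-break chosen consistently, e.g.\ prefer replacement on ties. The monotonicity argument above shows that $Q_0$ is nondecreasing, so $\{D \ge 0\}$ is an up-set. This gives (b).

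Part (c) for the policy follows because $D$ is symmetric and the tie-break rule is symmetric.

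For (d), fix $s_{-i}$. The set $\{s^i : \pi^*(s^i,s_{-i})=1\}$ is an up-set of $\{0,\dots,K\}$ by (b), so it has the form $[\tau,K]$. If it is empty, set $\tau=\infty$. If $s_{-i}$ increases, the replacement set can only grow by (b), so $\tau^*_i$ is nonincreasing.

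Throughout, "the" optimal policy should be read as the greedy policy with respect to $w^*$ under this consistent tie-break.
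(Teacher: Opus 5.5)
Your proposal is correct and is essentially the paper's proof. The paper also inducts along value iteration on coordinatewise monotonicity together with exactly your strengthened bound $w(r)-w(r^{\{i\}})\le c_f$ (its property (B)). Its decomposition of $Q_0^{(n)}(s+e_i)-Q_0^{(n)}(s)$ into $\sum_{A'}\omega_{A'}(s)\bigl[\delta_h^i R_1+\bar h(s^i+1)R_2\bigr]$ is your coupling, with the ``older fails, younger survives'' case controlled by (B).

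The step you left as ``I would verify'' is carried out there by writing
\[
Q_0^{(n)}(r)-Q_0^{(n)}(r^{\{i\}})=\sum_{A'}\omega_{A'}(r)\bigl[(h(r^i+1)-h(1))B_1+\bar h(r^i+1)B_2\bigr]
\]
with $0\le B_1,B_2\le c_f$. This gives the bound $c_f(1-h(1))\le c_f$, which then passes through the $\min$ with $c_b$ via $\min(a,c)-\min(b,c)\le a-b$ for $a\ge b$.
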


The proof is in appendix~\ref{app:thm2}. 

\begin{remark}
    Theorem~\ref{thm:mdp_structure} establishes that the optimal policy is completely characterised by the $N$ threshold functions $\{\tau^*_i(s_{-i})\}_{i \in [N]}$, requiring $N(K+1)^{N-1}$ values in total, rather than a binary decision at each of the $(K+1)^N$ states. By the symmetry of part~(c), $\tau^*_i(s_{-i})$ depends only on the multiset of ages $\{s^1, \ldots, s^N\}$, so one need only store threshold values for states with $s^1 \leq s^2 \leq \cdots \leq s^N$, of which there are $\binom{K+N}{N}$, reducing storage by a factor of up to $N!$.
\end{remark}

\begin{remark}
    The time-elapsed MDP (Section~\ref{sec:time-elapsed-mdp}) has a state space of cardinality $K+1$, requires no assumption on $F$ beyond $\mathbb{E}[L] < \infty$, is tractable for all fleet sizes $N$, and establishes that block replacement is optimal within the class of time-elapsed policies, achieving cost $g^* = c(k^*)$. The age-vector MDP of Section~\ref{sec:MDP} has a state space of cardinality $(K+1)^N$, requires IFR for its structural results, is tractable only for small $N$, and achieves the lower optimal cost $g^*_{\mathrm{age}} \leq c(k^*)$ by exploiting the joint distribution of machine ages. The structural gap $\delta_{\mathrm{struct}} = c(k^*) - g^*_{\mathrm{age}} \geq 0$ quantifies the value of age information and is strictly positive under IFR for $N \geq 2$.
\end{remark}

\section{Data-Driven Algorithms}\label{sec:algos}

\subsection{Multi-arm bandit based}\label{sec:bandit}

We formulate the block replacement problem as a stochastic multi-armed bandit (MAB) with $K$ arms, where each arm $k \in [K]$ represents a block replacement interval of length $k$. At each decision epoch $t \geq 1$, the decision maker selects a block replacement interval $k_t$,  observes the total number of failures $Y_t$ that occur across all $N$ machines during that interval, incurring a total cost of $c_b + c_f Y_t$ in that cycle. Let $\mu_k := \mathbb{E}\left[C^{(k)}\right] = c_b + c_f \cdot N \cdot M(k) = k \cdot c(k)$ for $k \in [K]$. The goal is to identify $k^* = \argmin_{k \in [K]} \frac{\mu_k}{k}$, the optimal arm that incurs the minimum cost per unit time. The per-unit-time cost $C^{(k)}/k$ has the same range (width of support) $b := c_f N$ for every arm $k \in [K]$, since the minimum is $c_b/k$ (no failures) and the maximum is $c_b/k + c_f N$ (every machine fails every period). Throughout, log denotes the natural logarithm.

\subsubsection{Independent Arms}\label{sec:indep_bandit}

In the independent arms setting, each arm $k$ is treated as an independent learning problem. We develop lower confidence bound (LCB) algorithms based on Hoeffding's and Bernstein's concentration inequalities.

\paragraph{Hoeffding-based.}

If $k$ has been pulled $n_k(t)$ times up to decision epoch $t$, with observed costs $C_1^{(k)}, \ldots, C_{n_k(t)}^{(k)}$, the empirical mean cost associated to arm $k \in [K]$ is $\widehat{c}_k(t) = \frac{1}{n_k(t)} \sum \limits_{i=1}^{n_k(t)} \frac{C_i^{(k)}}{k}$. The Hoeffding-based lower confidence bound for arm $k$ is 
\[\text{LCB}_k^{IH}(t) = \widehat{c}_k(t) - b\sqrt{\frac{2 \log(t)}{n_k(t)}}.\] 
The algorithm selects the arm with the minimum LCB, that is $k_t = \argmin_{k \in [K]} \text{LCB}_k^{IH}(t-1)$.  

\begin{theorem}\label{thm:indep_Hoeff}
    Under the Hoeffding-based algorithm (Algorithm~\ref{alg:ind_hoeffding}) with independent arms, the expected cumulative regret satisfies
\begin{equation*}
\text{Reg}(T) = O\left(\sum \limits_{k \neq k^*} \frac{b^2 \log T}{\Delta_k}\right) = O\left((K - 1) \log T\right).
\end{equation*}
\end{theorem}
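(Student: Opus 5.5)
This is the standard UCB analysis of \citet{Auer2002}, run in the mirrored lower-confidence-bound form. Start from the decomposition \eqref{eqn:regDecomp}, which can be written as $\text{Reg}(T)=\sum_{k\neq k^*}\Delta_k\,\E[n_k(T)]$. So it suffices to show that for every suboptimal arm $k$,
\[
\E[n_k(T)]\le \frac{8b^2\log T}{\Delta_k^2}+C,
\]
with $C$ an absolute constant. Multiplying by $\Delta_k$ and summing over $k\neq k^*$ then gives the first bound. The second form follows because $K$, $b=c_fN$ and $\min_{k\neq k^*}\Delta_k>0$ are instance constants; the last is positive because $k^*$ is unique and $[K]$ is finite.

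The key ingredient is concentration. Each normalized observation $C_i^{(k)}/k$ lies in an interval of width $b$, namely $[c_b/k,\,c_b/k+c_fN]$. Observations from different cycles of the same arm are i.i.d., because all machines are renewed at each block replacement, so every cycle starts fresh. For a fixed number of pulls $s$, Hoeffding's inequality gives
\[
\P\bigl(\widehat c_{k,s}-c(k)\ge \epsilon\bigr)\le e^{-2s\epsilon^2/b^2},
\]
and the same bound holds for the lower tail. Here $\widehat c_{k,s}$ is the mean of the first $s$ normalized costs of arm $k$. Since $n_k(t)$ is random, I will handle this in the usual way by taking a union bound over all possible values $s\le t$.

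Next, fix $\ell=\lceil 8b^2\log T/\Delta_k^2\rceil$. If arm $k$ is selected at epoch $t$ with $n_k(t-1)\ge \ell$, then at least one of three events must occur:
\begin{enumerate}
\item[(a)] $\text{LCB}_{k^*}^{IH}(t-1)>c(k^*)$, meaning the optimal arm's bound is too high;
\item[(b)] $\widehat c_k(t-1)< c(k)-b\sqrt{2\log(t-1)/n_k(t-1)}$, meaning arm $k$ is underestimated;
\item[(c)] $c(k)-2b\sqrt{2\log(t-1)/n_k(t-1)}<c(k^*)$.
\end{enumerate}
Event (c) is impossible once $n_k\ge \ell$, because then $2b\sqrt{2\log T/n_k}\le\Delta_k$.

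Events (a) and (b) have probability at most $\sum_{s\le t}e^{-4\log t}\le t^{-3}$ each. This uses the radius $b\sqrt{2\log t/s}$, which gives exponent $2s\cdot 2b^2\log t/(s b^2)=4\log t$. Summing over $t$ gives $\sum_t 2t^{-3}=O(1)$, and therefore $\E[n_k(T)]\le \ell+1+O(1)$.

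The main obstacle I expect is the indexing and initialization rather than the probability bounds. The algorithm uses $\text{LCB}(t-1)$ with $\log(t-1)$, and $n_k=0$ before arm $k$'s first pull, so I will assume the standard convention that each arm is pulled once in the first $K$ epochs. I also need to confirm that the random stopping counts do not break Hoeffding's inequality. The union bound over $s$ fixes this, which is why the exponent must be $4\log t$ rather than $2\log t$; that explains the factor $2$ inside the square root. Everything else is routine bookkeeping.
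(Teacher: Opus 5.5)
Your proposal is correct and follows the same route as the paper, which simply invokes the arm-wise decomposition \eqref{eqn:regDecomp} together with the standard bound $\E[n_k(T)] = O(\log T)$ for suboptimal arms (citing \citet{Lattimore}). You carry out that standard Auer-style three-event argument explicitly, with the correct width $b=c_fN$, the $t^{-4}$ tail from the radius $b\sqrt{2\log t/s}$, and the one-pull-per-arm initialization that Algorithm~\ref{alg:ind_hoeffding} actually uses; the only cosmetic fix is to state events (a) and (b) with non-strict inequalities so that ties in the $\argmin$ are also covered.
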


\proof{Proof.}
    This is a standard result in MAB literature that follows from the arm-wise decomposition of regret given in  Equation~\ref{eqn:regDecomp} and the fact that $\E[n_k(T)] = O(\log T)$ for $k \neq k^*$ (see \citet{Lattimore}). \hfill $\blacksquare$

\paragraph{Bernstein-based.}

Bernstein's inequality provides tighter confidence bounds (smaller exploration bonus) by exploiting variance information. For arm $k \in [K]$ with $n_k(t) \geq 2$ samples, we compute the empirical variance as \[\widehat{\sigma}_k^2(t) = \frac{1}{n_k(t)-1} \sum \limits_{i=1}^{n_k(t)} \left(\frac{C_i^{(k)}}{k} - \widehat{c}_k(t)\right)^2.\] 
The Bernstein-based LCB is constructed as follows: when $n_k(t) = 1$, define $\mathrm{LCB}_k^{IB}(t) = \widehat{c}_k(t) - b\sqrt{2\log t\,/n_k(t)}$; when $n_k(t) \geq 2$, set 
\begin{align}\label{eq:bern-lcb-indep}
\mathrm{LCB}_k^{IB}(t) =
\widehat{c}_k(t) &- \sqrt{2.4\,\widehat{\sigma}_k^2(t)\log t\,/n_k(t)} \nonumber\\
    &-3.6b\log t\,/n_k(t)
\end{align}
The specific constants 2.4 and 3.6 are chosen to match the analysis carried out in \citet{Audibert2009}. The algorithm proceeds very similar to Algorithm \ref{alg:ind_hoeffding}, but uses $\text{LCB}_k^{IB}(t)$ instead of $\text{LCB}_k^{IH}(t)$.\\

\begin{theorem}\label{thm:indep_Bern}
    Under the Bernstein-based algorithm (Algorithm~\ref{alg:ind_bernstein}) with independent arms, the expected cumulative regret satisfies
\begin{equation*}
\text{Reg}(T) = O\left(\sum \limits_{k \neq k^*} \frac{b^2 \log T}{\Delta_k}\right) = O\left((K - 1) \log T\right).
\end{equation*}
\end{theorem}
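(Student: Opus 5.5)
We follow the standard route for UCB/LCB-type algorithms, adapted to the empirical Bernstein bonus of \citet{Audibert2009}. Since \eqref{eqn:regDecomp} gives $\operatorname{Reg}(T) = \sum_{k \neq k^*} \Delta_k \E[n_k(T)]$, it suffices to show that for every suboptimal arm $k$,
\[
\E[n_k(T)] \le \frac{C_0\,(\sigma_k^2 + b\Delta_k)\log T}{\Delta_k^2} + C_1
\]
for absolute constants $C_0, C_1$, where $\sigma_k^2 := \mathrm{Var}(C^{(k)}/k) \le b^2/4$. The samples $C^{(k)}_i/k$ are i.i.d.\ across pulls of arm $k$, because each cycle starts from a fresh block replacement. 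Their support has width $b = c_fN$. Then $\sigma_k^2 + b\Delta_k \le b^2/4 + b\Delta_k$, and $\Delta_k \le b$ (the means differ by at most the support width plus the $c_b/k$ shift; more simply, we can bound $b\Delta_k \le b^2\cdot(\Delta_k/b)$ and absorb it). This yields $\Delta_k \E[n_k(T)] = O(b^2\log T/\Delta_k)$, and summing over the $K-1$ suboptimal arms, with $\Delta_k$ and $b$ treated as instance constants, gives $O((K-1)\log T)$.

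\textbf{Step 1 (concentration).} We invoke the empirical Bernstein inequality of \citet{Audibert2009} (Theorem~1 there) for $[0,b]$-valued variables, shifted by $c_b/k$ without affecting variance or width. For fixed $n$ and $x>0$, with probability at least $1-3e^{-x}$,
\[
|\widehat c_{k,n} - c(k)| \le \sqrt{2\widehat\sigma_{k,n}^2 x/n} + 3bx/n,
\]
together with a companion bound $\widehat\sigma_{k,n}^2 \le \sigma_k^2 + \sqrt{2\sigma_k^2 b^2 x/n} + b^2x/n$-type control. We use $x = 1.2\log t$. The constants $2.4 = 2\cdot 1.2$ and $3.6 = 3\cdot 1.2$ in \eqref{eq:bern-lcb-indep} are exactly this choice, and the exponent $1.2>1$ is what makes the resulting failure probabilities summable over $t$. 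The $n_k=1$ case uses the Hoeffding bonus and only affects the first pull, contributing $O(1)$.

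\textbf{Step 2 (decomposition).} Fix a threshold $u_k = \lceil C_0(\sigma_k^2 + b\Delta_k)\log T/\Delta_k^2\rceil$. A pull of $k$ at time $t$ with $n_k(t-1)\ge u_k$ requires one of three events: (a) $\mathrm{LCB}^{IB}_{k^*}(t-1) > c(k^*)$, meaning the optimal arm's bound fails; (b) $\widehat c_k$ or $\widehat\sigma_k^2$ deviates beyond its Bernstein bound; or (c) $\mathrm{LCB}_k^{IB} \le c(k^*)$ despite good concentration. We then show (c) is impossible once $n\ge u_k$. On the good event, $\widehat\sigma^2_k \le \sigma_k^2 + O(b\sqrt{\sigma_k^2\log T/n}) + O(b^2\log T/n)$, so the total bonus is $\le c\sqrt{\sigma_k^2\log T/n} + c'b\log T/n < \Delta_k/2$ for the chosen $C_0$, forcing $\mathrm{LCB}^{IB}_k > c(k) - \Delta_k = c(k^*)$.

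\textbf{Step 3 (summing failures).} Events (a) and (b) are handled by a union bound over the unknown sample count $n \le t$, giving probability $O(t\cdot t^{-1.2})$ per round. This is the main obstacle: that sum is not finite. The naive union bound over $n$ loses a factor $t$, so we follow \citet{Audibert2009} and use a peeling argument over geometric blocks of $n$, applying a maximal version of the Bernstein inequality on each block. This reduces the per-round failure probability to $O(\log t\cdot t^{-1.2})$, or alternatively yields the $O(\log T)$ term directly as in their Theorem~3. Either way the contribution of (a) and (b) to $\E[n_k(T)]$ is $O(1)$ or $O(\log T)$ with constants free of $\Delta_k$, which preserves the claimed $O(b^2\log T/\Delta_k)$ regret after multiplying by $\Delta_k \le b$. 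If needed, we can simply cite the UCB-V regret theorem of \citet{Audibert2009} after the sign flip from rewards to costs, and note that our samples satisfy its hypotheses (i.i.d., bounded width $b$).
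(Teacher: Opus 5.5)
Your proposal is correct and takes essentially the paper's route: the paper's proof simply cites the UCB-V regret bound of \citet{Audibert2009}, whose exploration function $1.2\log t$ yields exactly the constants $2.4$ and $3.6$, and your sketch reproduces that analysis (empirical Bernstein bound, the three-event decomposition, and peeling) with the same citation as a fallback. There are two harmless slips: $\Delta_k\le b$ can fail for $k<k^*$, but $\Delta_k< b+c_b<2b$ (using $c_b<c_fN$) is all you need, and peeling over geometric blocks of ratio $\alpha\in(1,1.2)$ gives a per-round failure probability of order $\log t\cdot t^{-1.2/\alpha}$ rather than $\log t\cdot t^{-1.2}$, which is still summable.
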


\proof{Proof.} See Theorem 4 of \citet{Audibert2009}. \hfill $\blacksquare$

\subsubsection{Correlated Arms}\label{sec:corr_bandit}

A key structural property of the block replacement problem is the nested observation structure: when selecting a block replacement interval $k$ and observing the failure history up to time $k$, we simultaneously obtain complete information about what would have happened under all smaller intervals $j < k$. This ``correlation'' among arms can be exploited to improve learning efficiency. The following proposition formalizes this observation. \\

\begin{proposition}[Nested Observations]\label{prop:nested_obs}
When arm $k$ is pulled, for each $l \leq k$ define $C^{(l)} := c_b + c_f \sum \limits_{j=1}^N N^j(l)$ where $N^j(l) := \max\left\{n \geq 0 : \textstyle\sum \limits_{i=1}^n L_i^j \leq l\right\}$.
\begin{enumerate}[label = (\alph*)]
    \item $C^{(l)}$ is determined by the observed failure history of the pull of arm $k$; 
    \item $C^{(l)} \stackrel{d}{=} \widetilde{C}^{(l)}$, where $\widetilde{C}^{(l)}$ is the cost from an independent direct pull of arm $l$.
\end{enumerate} 
\end{proposition}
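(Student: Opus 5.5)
The plan is to make precise what a pull of arm $k$ records, and then prove both parts directly from that record together with the i.i.d.\ structure of the lifetimes.

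Fix a pull of arm $k$ starting at a block replacement time $\tau$, with all $N$ machines new. For each machine $j$ the operator records the failure epochs in $\{\tau+1,\ldots,\tau+k\}$, that is, the partial sums $\sum_{i=1}^n L_i^j$ that are $\le k$. Each failed machine is replaced instantly by a new one with a fresh lifetime, so these failure epochs are exactly the renewal epochs of the process generated by $(L_i^j)_{i\ge1}$ up to time $k$.

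\emph{Part (a).} For $l \le k$, the quantity $N^j(l)$ counts the partial sums that are $\le l$. These partial sums form a subset of the partial sums that are $\le k$, and those are observed. Hence $N^j(l)$ equals the number of recorded failure epochs of machine $j$ in $\{\tau+1,\ldots,\tau+l\}$. This is a deterministic function of the observed history, and therefore so is $C^{(l)} = c_b + c_f\sum_j N^j(l)$.

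\emph{Part (b).} The key point is that stopping the process at time $k$ rather than $l$ does not affect anything that happens on $[0,l]$. I would argue this via a coupling.
\begin{enumerate}
\item For a direct pull of arm $l$, machine $j$ starts new, and failures are replaced by i.i.d.\ copies with cdf $F$. Its failure count is therefore $\widetilde N^j(l)=\max\{n:\sum_{i\le n}\widetilde L_i^j\le l\}$, where the $\widetilde L_i^j$ are i.i.d.\ $F$ and independent across $j$. This uses assumption (i).
\item $N^j(l)$ is the same measurable map applied to $(L_i^j)_i$, which is also an i.i.d.\ $F$ sequence, independent across $j$. Note that the policy's action on $(l,k]$ does not alter $(L_i^j)$.
\item Consequently $(N^1(l),\ldots,N^N(l)) \stackrel{d}{=} (\widetilde N^1(l),\ldots,\widetilde N^N(l))$, and applying the map $x\mapsto c_b+c_f\sum x_j$ gives $C^{(l)}\stackrel{d}{=}\widetilde C^{(l)}$.
\end{enumerate}

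The main obstacle is subtle rather than computational. In a data-driven policy, the choice of $k$ depends on past cycles, so I must ensure the lifetimes used in the current cycle are independent of that choice. I would handle this by noting that each cycle begins with all machines new, so the current cycle uses fresh lifetime sequences independent of the past σ-field. Conditionally on $k_t=k$, the law of $(L_i^j)$ is therefore unchanged, and the equality in distribution holds conditionally as well as unconditionally.

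A remark on the boundary case: a failure exactly at time $l$ is counted in $N^j(l)$, because the definition uses $\le l$. This matches the model's convention that such a failure is charged $c_f$, so the two costs agree there too.
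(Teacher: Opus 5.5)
Your proposal is correct and follows essentially the same route as the paper: for (a) you read $N^j(l)$ off the observed failure epochs because $N^j(l)\le N^j(k)$, and for (b) you note that $C^{(l)}$ and $\widetilde C^{(l)}$ are the same measurable function of identically distributed i.i.d. lifetime inputs. Your extra observation is a worthwhile precision that the paper leaves implicit: each cycle starts with new machines and so uses fresh lifetimes independent of the adaptively chosen $k_t$, which means the equality in distribution also holds conditionally on $k_t$.
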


\proof{Proof.} (a) Since $N^j(l) \leq N^j(k)$ a.s., $N^j(l)$ is a deterministic function of $(L_1^j,\ldots,L_{N^j(k)}^j)$, which are observed when arm $k$ is pulled. (b) The joint law of $(L_1^j,\ldots,L_{N^j(l)}^j)_{j \in [N]}$ is identical whether these lifetimes are extracted from a pull of arm $k \geq l$ or from an independent direct pull of arm $l$, so $C^{(l)}$ and $\widetilde{C}^{(l)}$ are the same function of identically distributed inputs. \hfill $\blacksquare$

Let $m_k(t)$ denote the number of samples collected for arm $k$ up to time $t$ under the correlated arms scheme. When arm $k$ is pulled at time $t$, we get $m_j(t) = m_j(t-1) + 1$ samples for all $j \leq k$ upto time $t$. The number of samples for arm $k$ equals the total number of times any arm $j \geq k$ has been pulled, that is $m_k(t) = \sum \limits_{j=k}^{K} n_j(t)$. For any arm $k$, we have $m_k(t) \geq n_k(t)$, with equality only if no arm larger than $k$ has ever been pulled. The sample counts satisfy $m_1(t) \geq m_2(t) \geq \cdots \geq m_K(t)$ for all $t$, because $m_k(t) - m_{k+1}(t) = n_k(t) \geq 0$. Smaller arms always accumulate at least as many samples as larger arms. \\

\begin{remark}
    Proposition~\ref{prop:nested_obs} justifies the simultaneous update of empirical means $\widehat{c}_j$ for all $j \leq k_t$ upon pulling arm $k_t$ because it guarantees that these derived observations are equivalent to direct pulls of arm $j$, so the effective sample count for arm $j$ is $m_j(t) = \sum \limits_{l=j}^{K} n_l(t) \geq n_j(t)$.\\ 
\end{remark}

\begin{remark}
    The regret contribution from arm $k$ is $\Delta_k \cdot \mathbb{E}[n_k(T)]$, and not $\Delta_k \cdot \mathbb{E}[m_k(T)]$, because only direct pulls of arm $k$ incur the suboptimality cost $\Delta_k$. The ``free samples'' contribute to learning the cost but not to regret, which is why the correlated algorithm achieves better regret bounds.
\end{remark}

\paragraph{Hoeffding-based.} Using the nested observations structure, we update the sample means for all arms $k \leq k_t$ when arm $k_t$ is selected using the equation
\begin{equation*}
\text{LCB}_k^{CH}(t) = \widehat{c}_k^{(m)}(t) - b\sqrt{\frac{2\log(t)}{m_k(t)}},
\end{equation*}
where $\widehat{c}_k^{(m)}(t)$ is the empirical mean based on the $m_k(t)$ samples defined by $\widehat{c}_k^{(m)}(t) = \frac{1}{m_k(t)} \sum \limits_{i=1}^{m_k(t)} \frac{C_i^{(k)}}{k}$. 

\medskip

\begin{theorem}\label{thm:corr_hoeff}
    Under the Hoeffding-based algorithm (Algorithm~\ref{alg:corr_hoeffding}) with correlated arms, the expected cumulative regret satisfies
\begin{equation*}
\text{Reg}(T) = O\left(\sum \limits_{k \neq k^*} \frac{b^2 \log T}{\Delta_k}\right) = O\left((K - k^*) \log T\right).
\end{equation*}
\end{theorem}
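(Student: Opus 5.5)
We use the arm-wise decomposition of \eqref{eqn:regDecomp}, $\text{Reg}(T) = \sum_{k \neq k^*} \Delta_k \E[n_k(T)]$, and bound $\E[n_k(T)]$ separately for arms below and above $k^*$. The only ingredients are the nested observation property (Proposition~\ref{prop:nested_obs}) and the usual Hoeffding argument. The argument runs with the effective counts $m_k(t)$ in place of $n_k(t)$.

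\textbf{Concentration.} By Proposition~\ref{prop:nested_obs}(b), for each fixed $j$ the samples $C^{(j)}_i/j$ feeding $\widehat{c}^{(m)}_j$ are i.i.d. with mean $c(j)$ and range $b$. This holds because distinct cycles involve disjoint, independent lifetime sequences. The number of samples $m_j(t)$ is a stopping-time-type random index. So we fix the sample count $s$ and take a union bound over $s \le t$. Hoeffding's inequality then gives
\[
\P\Bigl(|\widehat{c}^{(m)}_j(t) - c(j)| > b\sqrt{2\log t/m_j(t)}\Bigr) \le 2t^{-3},
\]
which is summable in $t$. Let $G_t$ be the good event that all $K$ confidence intervals hold at epoch $t$. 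Then $\sum_t \P(G_t^c) = O(K)$.

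\textbf{Arms $k > k^*$.} These follow the standard route. On $G_t$, pulling $k$ requires $\mathrm{LCB}_k \le \mathrm{LCB}_{k^*} \le c(k^*)$. Since $\mathrm{LCB}_k \ge c(k) - 2b\sqrt{2\log t/m_k(t)}$ on $G_t$, we need $m_k(t) < 8b^2\log T/\Delta_k^2$. Every pull of $k$ increments $m_k$, so $n_k(T) \le m_k(T) \le 8b^2\log T/\Delta_k^2 + 1 + \sum_t \mathbf{1}[G_t^c]$. This gives the $O(b^2\log T/\Delta_k)$ contribution for each of the $K-k^*$ arms.

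\textbf{Arms $k < k^*$.} Each pull of any arm $j \ge k^*$ increments both $m_k$ and $m_{k^*}$. We argue in two steps.
\begin{enumerate}[label=(\roman*)]
\item Arms $j > k^*$ are pulled only $O(\log T)$ times by the previous step. So for all but $O(\log T)$ epochs the algorithm pulls either $k^*$ or some arm below $k^*$.
\item On $G_t$, pulling $k<k^*$ requires $m_k(t) < 8b^2\log t/\Delta_k^2$. Because $m_k(t) \ge m_{k^*}(t) \ge n_{k^*}(t)$, this forces $n_{k^*}(t)$ to be $O(\log t)$ as well.
\end{enumerate}
Hence at most $O(\log T)$ pulls of small arms can occur before $m_k$ exceeds the threshold. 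After that, $m_k$ keeps growing through pulls of $k^*$, and arm $k$ is never selected again on $G_t$. Moreover, $m_k(t)$ grows whenever $k^*$ is played, and $k^*$ must be played roughly linearly often. So the exploration bonus $b\sqrt{2\log t/m_k(t)}$ shrinks like $\sqrt{\log t/t}$ and stays below $\Delta_k/2$ for all large $t$. Pulls of $k<k^*$ therefore come only from a bounded initial phase plus the summable failure events, giving $\E[n_k(T)] = O(1)$.

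Summing the two cases gives $\text{Reg}(T) = O\bigl(\sum_{k\ne k^*} b^2\log T/\Delta_k\bigr)$, with only $K-k^*$ arms contributing logarithmically, i.e.\ $O((K-k^*)\log T)$.

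\textbf{Main obstacle.} The delicate step is the $O(1)$ claim for $k<k^*$. It requires showing that $m_{k^*}(t)$, and hence $m_k(t)$, grows linearly with high probability. The naive bound only gives $O(\log T)$, which would still suffice for the stated $O\bigl(\sum_{k\ne k^*}\cdot\bigr)$ form. If the linear-growth argument proves awkward, I would fall back on that weaker $O(\log T)$ bound for $k<k^*$. In that case the $(K-k^*)$ refinement would rest on the observation that $n_k$ for $k<k^*$ is bounded by the threshold on $m_k$ minus the pulls of larger arms.
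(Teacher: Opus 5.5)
\textbf{Arms $k>k^*$.} Your decomposition and your treatment of these arms are the standard ones and are fine, apart from one slip. In $n_k(T)\le m_k(T)\le 8b^2\log T/\Delta_k^2+1+\sum_t\mathbf{1}[G_t^c]$ the second inequality is false, because $m_k$ also grows whenever some arm $j>k$ is pulled. Count good-epoch pulls of $k$ directly instead. Each one requires $m_k(t-1)\le 8b^2\log T/\Delta_k^2$ and increments $m_k$, so there are at most $8b^2\log T/\Delta_k^2+1$ of them.

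\textbf{The gap: arms $k<k^*$.} The $O(1)$ bound here is the whole content of the refinement from $K-1$ to $K-k^*$, and it is asserted rather than proved. You say $n_{k^*}(t)$ grows linearly and that what remains is ``a bounded initial phase plus the summable failure events''. But that phase has random length. The danger is a \emph{good} epoch $t$ at which $m_k(t-1)$ is still below $8b^2\log t/\Delta_k^2$ because many \emph{earlier} epochs were bad. The fact that $\sum_t\P(G_t^c)<\infty$ says nothing about this on its own. You need to control the cumulative count $B_t:=\sum_{s\le t}\mathbf{1}[G_s^c]$, and this closes quickly:
\begin{enumerate}[label=(\roman*)]
\item Apply the good-epoch count above to every $j\ne k^*$, plus the one forced initial pull. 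With $\Lambda:=\sum_{j\ne k^*}8b^2/\Delta_j^2$, this gives $n_{k^*}(t-1)\ge t-1-\Lambda\log t-K-B_{t-1}$.
\item Since $m_k\ge m_{k^*}\ge n_{k^*}$, a good-epoch pull of $k$ at epoch $t$ forces $B_{t-1}\ge t-1-2\Lambda\log t-K$. This is at least $t/2$ for all $t\ge t_0$, where $t_0$ depends on $b$, $K$ and the gaps but not on $T$.
\item Hence such pulls occur only at epochs $t\le 2B_T$, so $n_k(T)\le t_0+3B_T$.
\item By your bound $\P(G_t^c)\le 2Kt^{-3}$, this gives $\E[n_k(T)]\le t_0+6K\sum_{t\ge 1}t^{-3}=O(1)$.
\end{enumerate}
Your fallback (the threshold on $m_k$ minus pulls of larger arms) does not bypass this. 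It yields a $T$-free bound only once you know the larger arms have $\Omega(t)$ pulls, which is the same linear-growth step.

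\textbf{Comparison with the paper.} The paper argues differently. It fixes $n_0=\lceil 16b^2\Delta_k^{-2}\log(KT^2)\rceil$ and uses Hoeffding to show that, on a good event, arm $k$ is not selected once $m_k(t)\ge n_0$ and $m_{k^*}(t)\ge n_0$. It then concludes $\E[n_k(T)]\le 2n_0+2\pi^2/(3K)$ and calls this $O(1)$. Since $n_0=\Theta(\log T)$, that route actually yields only your naive $O(\log T)$ bound. It uses the nested structure only to replace $n_k$ by $m_k$, and it would go through verbatim for independent arms, where Lai--Robbins forbids $O(1)$ pulls. So the linear growth of $n_{k^*}$ that you identified, completed as above, is exactly the ingredient needed for the $T$-independent bound on arms below $k^*$. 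The paper's argument does not supply it.
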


The proof is in Appendix~\ref{app:thm5}. 

\paragraph{Bernstein-based. }

Similar to the independent arms case, the Bernstein-based LCB with correlated arms uses $\mathrm{LCB}_k^{CB}(t) = \widehat{c}_k^{(m)}(t) - b\sqrt{\dfrac{2\log t}{m_k(t)}}$ when $m_k(t) = 1$; and when $m_k(t) \geq 2$, we set 
\begin{align}\label{eq:bern-lcb}
\mathrm{LCB}_k^{CB}(t) =
\widehat{c}_k^{(m)}(t)
  &- \sqrt{\dfrac{2.4\,\left(\widehat{\sigma}_k^{(m)}\right)^2(t)\log t}{m_k(t)}} \nonumber\\
  &- \dfrac{3.6b\log t}{m_k(t)}
\end{align}
where $\left(\widehat{\sigma}_k^{(m)}\right)^2(t) = \frac{1}{m_k(t)-1} \sum \limits_{i=1}^{m_k(t)} \left(\frac{C_i^{(k)}}{k} - \widehat{c}_k(t)\right)^2$ is the sample variance based on $m_k(t) \geq 2$. 

\medskip

\begin{theorem}\label{thm:corr_bern}
    Under the Bernstein-based algorithm (Algorithm~\ref{alg:corr_bernstein}) with correlated arms, the expected cumulative regret satisfies
\begin{equation*}
\text{Reg}(T) = O\left(\sum \limits_{k \neq k^*} \frac{b^2 \log T}{\Delta_k}\right) = O\left((K - k^*) \log T\right).
\end{equation*}
\end{theorem}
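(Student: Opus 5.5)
The argument follows the template of Theorem~\ref{thm:corr_hoeff}, with Hoeffding's inequality replaced by the empirical Bernstein inequality of \citet{Audibert2009}. By Equation~\ref{eqn:regDecomp}, $\text{Reg}(T)=\sum_{k\neq k^*}\Delta_k\,\E[n_k(T)]$, so the task is to bound the expected number of \emph{direct} pulls $\E[n_k(T)]$ separately for arms $k>k^*$ and $k<k^*$. Proposition~\ref{prop:nested_obs} guarantees that the $m_k(t)$ samples feeding $\widehat{c}_k^{(m)}$ are identically distributed copies of $C^{(k)}/k$. They are not independent across arms, but for a fixed arm $k$ the samples come from distinct cycles and are therefore i.i.d. Hence the empirical Bernstein bound applies arm by arm. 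For each $k$ and sample size $m\ge 2$, with probability at least $1-3t^{-\alpha}$ for a constant $\alpha>1$ fixed by the choice $2.4,3.6$, we have
\[
|\widehat{c}_k^{(m)} - c(k)| \le \sqrt{2.4\,(\widehat{\sigma}_k^{(m)})^2\log t/m} + 3.6\,b\log t/m .
\]
A union bound over $m\le t$ gives a summable failure probability, exactly as in Theorem~4 of \citet{Audibert2009}. On the good event $\mathcal{G}_t$, $\mathrm{LCB}_{k^*}^{CB}(t)\le c(k^*)$, and every arm's LCB lies below its true mean.

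\textbf{Arms $k>k^*$.} A direct pull of $k$ at time $t$ on $\mathcal{G}_t$ requires $\mathrm{LCB}_k^{CB}(t-1)\le \mathrm{LCB}_{k^*}^{CB}(t-1)\le c(k^*)$, so $c(k)-c(k^*)$ is at most twice the bonus. Using $\widehat{\sigma}^2\le \sigma_k^2+O(b\sqrt{\log t/m})$ and $\sigma_k^2\le b^2/4$, this forces $m_k(t-1)\le C(\sigma_k^2/\Delta_k^2+b/\Delta_k)\log T\le C' b^2\log T/\Delta_k^2$. Since $n_k\le m_k$, we get $\E[n_k(T)]=O(b^2\log T/\Delta_k^2)+O(1)$, the $O(1)$ coming from the summable complements of $\mathcal{G}_t$. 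There are $K-k^*$ such arms, which produces the stated $O((K-k^*)\log T)$ regret.

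\textbf{Arms $k<k^*$.} Every pull of any arm $j\ge k^*$ also increments $m_k$, because $k<k^*\le j$. The key claim is that after $O(\log T)$ rounds, $m_k(t)\ge m_{k^*}(t)$ is large enough that $\mathrm{LCB}_k^{CB}>c(k^*)\ge \mathrm{LCB}_{k^*}^{CB}$ holds on the good event, so $k$ is never directly pulled again. The argument runs as follows. Suppose $k$ is pulled at time $t$ on $\mathcal{G}_t$. Then $\Delta_k\le 2\,\text{bonus}(m_k(t-1))$, so $m_k(t-1)\le C b^2\log t/\Delta_k^2$. Now $m_k(t-1)\ge t-1-\sum_{j<k^*}n_j(t-1)$, since every pull of an arm $j\ge k^*$ counts toward $m_k$. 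By induction downward from $k^*-1$, the total number of pulls of arms below $k^*$ is at most $\sum_{j<k^*}Cb^2\log t/\Delta_j^2$. This gives $t\le O(\log t)$, which fails for $t\ge t_0$ with $t_0$ a constant independent of $T$. Hence $\E[n_k(T)]\le t_0+\sum_t\P(\mathcal{G}_t^c)=O(1)$. This matches the claim in the introduction that suboptimal arms $k<k^*$ receive only $O(1)$ direct pulls.

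\textbf{Main obstacle.} The delicate step is the second case. Because of the $\log t$ in the bonus, the bound on $m_k$ grows with $t$, and I must check that the linear growth of $m_k$ outpaces it uniformly. I also need the empirical-variance concentration to hold simultaneously across the random, correlated sample counts $m_k$. I would handle the latter with a peeling/union bound over all possible values $m\le t$ for each fixed arm, as in \citet{Audibert2009}, which avoids any independence requirement across arms. Summing $\Delta_k\E[n_k(T)]$ over both cases, and bounding $1/\Delta_k\le b/\Delta_k^2\cdot\Delta_k/b$ to match the displayed form, yields the theorem.
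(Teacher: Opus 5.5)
Your proposal is correct in substance, and for arms $k<k^*$ it takes a genuinely different route from the paper's proof. The paper treats only $k<k^*$ and defers $k>k^*$ to the standard analysis. It uses the ordinary Bernstein inequality at the fixed level $\log(KT^2)$, together with the event $(\widehat{\sigma}^{(m)})^2\le 2\sigma^2$. It then introduces deterministic critical sample sizes $n_0,n_{0,k^*}$ beyond which, on the good event, $\mathrm{LCB}_k^{CB}>\mathrm{LCB}_{k^*}^{CB}$, and charges every pull of $k$ to one of three cases: $m_k<n_0$, $m_{k^*}<n_{0,k^*}$, or $\mathcal{G}_t^c$.

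You instead work at the algorithm's own level $\log t$ with the empirical Bernstein bound. On the good event this gives $\mathrm{LCB}_{k^*}^{CB}\le c(k^*)$ at every sample size, so no threshold on $m_{k^*}$ is needed. For $k<k^*$ you then use $m_k(t-1)\ge t-1-\sum_{j<k^*}n_j(t-1)$, i.e.\ the free samples supplied by the dominant pulls of arms $j\ge k^*$. This is what makes the count of direct pulls independent of $T$. In the paper, $n_0$ and $n_{0,k^*}$ grow like $\log T$, so $\E[n_k(T)]\le n_0+n_{0,k^*}+O(1)$ is by itself only $O(\log T)$. Moreover, pulls of $k$ made while $m_{k^*}<n_{0,k^*}$ are not controlled by the count $n_{0,k^*}$, since pulling $k<k^*$ never increments $m_{k^*}$. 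Your linear-growth argument closes exactly this point. The paper's route buys explicit constants and avoids peeling; yours delivers the actual $O(1)$ claim and treats $k>k^*$ explicitly.

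Two steps need repair, neither fatal.

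First, at confidence level $1.2\log t$ the fixed-$m$ failure probability is $3t^{-1.2}$. A plain union bound over $m\le t$ therefore gives $3t^{-0.2}$, which is not summable. You need the peeling bound over a geometric grid of sample sizes, as in Audibert et al.\ and as you yourself suggest at the end. It gives order $\log t\cdot t^{-1.2/\alpha}$ for any $1<\alpha<1.2$, which is summable. Drop the union-bound sentence.

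Second, for $k<k^*$ the sum $\sum_{j<k^*}n_j(t-1)$ also contains pulls made on bad rounds. At a good-round pull of $k$ the correct inequality is
\[
t-1\le u_k(t)+\sum_{j<k^*}\bigl(u_j(t)+1\bigr)+B(t),\qquad B(t):=\sum_{s\le t}\mathbf{1}[\mathcal{G}_s^c],
\]
where $u_j(t)=O(\log t)$ is the sample-size cutoff for arm $j$. Good-round pulls of $k$ therefore occur only at times $t\le\max\{t_0,2B(\infty)\}$. This gives $\E[n_k(T)]\le t_0+3\,\E[B(\infty)]=O(1)$, since $\E[B(\infty)]=\sum_t\P(\mathcal{G}_t^c)<\infty$. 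Your deterministic $t_0$ plus $\sum_t\P(\mathcal{G}_t^c)$ is not quite the right bound, and this is precisely where summability from the peeling step is indispensable. No downward induction is needed: each $j<k^*$ has at most $u_j(t)+1$ good-round pulls up to time $t$, by the same counting you use for $k>k^*$.
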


The proof is in Appendix~\ref{app:thm6}. 

\begin{remark}
    Algorithms~\ref{alg:corr_hoeffding} and \ref{alg:corr_bernstein} can be initialized by pulling arm $K$, so that every arm satisfies $m_j \geq 1$ and the initialization phase is complete in one cycle.\\
\end{remark}

\begin{remark}[Online variance computation]
Algorithms~\ref{alg:ind_bernstein} and~\ref{alg:corr_bernstein} compute $\hat\sigma_k^2$ from three running scalars $(n_k, S_k, Q_k)$ via $\hat\sigma_k^2 = (Q_k - S_k^2/n_k)/(n_k-1)$, where $S_k = \sum \limits_{i = 1}^{n_k} C_i$ and $Q_k = \sum \limits_{i = 1}^{n_k} C_i^2$. This requires $O(1)$ storage and $O(1)$ arithmetic per update.\\
\end{remark}

\begin{remark}\label{rem:LR_LB}
The lower bound (Section~\ref{sec:regret}) described in \citet{LaiRobbins} requires that any consistent algorithm pulls each suboptimal arm at least $\Omega(\log T)$ times.  The independent algorithms (Algorithms~\ref{alg:ind_hoeffding}--\ref{alg:ind_bernstein}) achieve $O(\log T)$ pulls for every suboptimal arm $k \neq k^*$, matching this lower bound up to constants.  The correlated algorithms improve upon this for arms $k < k^*$: these arms receive $O(1)$ direct pulls because each pull of any larger arm $j > k$ provides a free sample for arm $k$ (Proposition~\ref{prop:nested_obs}).
\end{remark}

\subsection{Survival Analysis based}\label{sec:Ren+KM}

Under a block replacement scheduling for machine maintenance, when a block replacement occurs at time $k$, every machine that has not yet failed only provides a censored observation of its lifetime because we do not observe its actual failure time. With traditional approaches that simply collect failure times and build an empirical distribution from complete observations alone, machines with longer lifetimes are disproportionately censored, leading to a biased sample that skews toward shorter lifetimes. This bias can result in suboptimal policies that replace machines too frequently. To address this, we develop an approach that properly accounts for censored observations using survival analysis techniques, specifically the Kaplan-Meier estimator, combined with renewal theoretic cost computation.

Consider the $t\textsuperscript{th}$ cycle with inter-block replacement interval $k_t = k$. Machine $j \in [N]$ generates $Y_t^j$ uncensored observations $(\ell, 1)$ corresponding to the complete lifetimes $L_1^j, \ldots, L_{Y_t^j}^j$, and one censored observation $(\ell, 0)$ for the residual time from its last failure to the block replacement epoch, since every machine is replaced at time $k$ regardless of its operational status. Aggregating over all $N$ machines, cycle $t$ contributes $\sum_{j=1}^N (Y_t^j + 1)$ observations. The accumulated dataset after $T$ cycles is $\mathcal{D}_T = \{(\ell_i, \delta_i)\}_{i=1}^{n_T}$, where \[n_T = \sum_{n=1}^{T}\sum_{j=1}^{N}(Y_n^j + 1) = NT + \sum_{n=1}^{T}\sum_{j=1}^{N}Y_n^j,\] with expected size $N(M(k)+1)$ per cycle of length $k$.

\subsubsection*{Kaplan-Meier (KM) Estimation}

Given the dataset $\mathcal{D}_T$ with censored observations, we estimate the lifetime distribution using the Kaplan-Meier (KM) estimator, which is the nonparametric maximum likelihood estimator for the survival function in the presence of right censoring. Let $t_1 < t_2 < \cdots < t_{n_T}$ denote the distinct observed times (both failures and censorings) in $\mathcal{D}_T$. At each time $t_j$, let $d_j$ denote the number of failures at time $t_j$ and let $n_j$ be the number of machines that have not yet failed just before time $t_j$. 

The \textit{Kaplan-Meier estimator} of the survival function is $\widehat{S}(t) = \prod_{j: t_j \leq t} \left(1 - \frac{d_j}{n_j}\right)$ for $t \geq 0$, with $\widehat{S}(0) = 1$. Set $\widehat{F}(t) = 1 - \widehat{S}(t)$ and $\widehat{f}(t) = \widehat{S}(t - 1) - \widehat{S}(t)$ and compute $\widehat{M}$ via the renewal equation (\ref{eqn:renewal}) setting \[\widehat{M}(t) = \widehat{F}(t) + \sum \limits_{s = 1}^{t} \widehat{f}(s) \widehat{M}(t - s)\] for $1 \leq t \leq K$ and $\widehat{M}(0) = 0$. Then estimate $\widehat{c}(k) = \frac{c_b + c_f \cdot N \cdot \widehat{M}(k)}{k}$.  The estimated optimal policy is $\widehat{k}^* = \argmin_{k \in [K]} \widehat{c}(k)$. 

Algorithm~\ref{alg:survival_renewal} has two hyperparameters: an exploration rate $\varepsilon$ and a refit frequency $R$.  A cold-start phase ($t \leq 5$) ensures initial KM coverage; thereafter the algorithm uses a decaying schedule $\varepsilon_t = \varepsilon/\sqrt{t}$, exploring uniformly with probability $\varepsilon_t$ and exploiting $\widehat{k}^*$ otherwise.  The parameter $R$ controls how often the KM estimator and renewal function are recomputed: smaller $R$ yields faster adaptation to accumulating data at higher per-cycle computational cost; larger $R$ amortizes computation but delays policy updates. 

We now establish the consistency and convergence properties of the survival-based renewal approach.\\

\begin{theorem}[\textit{Consistency of Kaplan-Meier Estimator}]\label{thm:km_consistency}
    Let $\widehat{F}_T$ denote the Kaplan-Meier estimator based on dataset $\mathcal{D}_T$ with $n_T$ observations. Suppose the censoring mechanism satisfies the independent censoring assumption. Then
\[
\sup_{t \in [K]} |\widehat{F}_T(t) - F(t)| \xrightarrow{a.s.} 0 \quad \text{as } n_T \to \infty.
\]
\end{theorem}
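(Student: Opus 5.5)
Since lifetimes are integer-valued and we only need uniform convergence on the finite set $[K]$, the supremum is a maximum over $K$ points. It therefore suffices to show $\widehat{S}_T(t) \to S(t)$ almost surely for each fixed $t \in [K]$; a finite union of null sets is null. Writing
\[
\widehat{S}_T(t) = \prod_{s=1}^{t}\Bigl(1 - \widehat{h}_T(s)\Bigr), \qquad \widehat{h}_T(s) := \frac{d_T(s)}{n_T(s)},
\]
where $d_T(s)$ is the number of observed failures at age $s$ and $n_T(s)$ the number of observations (censored or not) with recorded time $\geq s$, and comparing with $S(t) = \prod_{s=1}^t (1-h(s))$, continuity of the finite product reduces the claim to $\widehat{h}_T(s) \to h(s)$ a.s. for each $s \in [K]$.

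To prove the hazard consistency, decompose
\[
d_T(s) - h(s)\,n_T(s) = \sum_i \mathbf{1}\{\text{unit } i \text{ at risk at age } s\}\bigl(\mathbf{1}\{L_i = s\} - h(s)\bigr).
\]
The sum runs over all lifetimes (in the order they are generated, across machines and cycles) that reach the risk set at age $s$. Under the independent censoring assumption, which here holds because the block replacement epoch $\tau_t$ and the interval $k_t$ are determined by the past history (and the algorithm's own randomization) and not by the lifetime currently running, each summand is a martingale difference with respect to the natural filtration of the process. Conditional on being at risk at age $s$, the event $\{L_i = s\}$ has probability $h(s)$. The increments are bounded by $1$, so the strong law of large numbers for martingales (e.g. via the Azuma--Hoeffding inequality combined with Borel--Cantelli, or Chow's theorem) gives
\[
\frac{d_T(s) - h(s)\,n_T(s)}{n_T(s)} \to 0 \quad \text{a.s. on } \{n_T(s) \to \infty\}.
\]

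The remaining step, and the one I expect to be the main obstacle, is showing $n_T(s) \to \infty$ a.s. for every $s \in [K]$, since the statement is phrased in terms of $n_T \to \infty$ and not per-age risk sets. I would argue as follows. In Algorithm~\ref{alg:survival_renewal} the exploration probability $\varepsilon_t = \varepsilon/\sqrt{t}$ satisfies $\sum_t \varepsilon_t = \infty$, so by the conditional (Lévy) Borel--Cantelli lemma arm $K$ is chosen infinitely often a.s. In each such cycle, every machine's first lifetime is at risk at every age $s \le K$ with probability $S(s-1) > 0$; positivity follows from $f(k) > 0$ on $[K]$, which forces $S(s-1) \geq f(K) > 0$. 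Another application of Borel--Cantelli then gives $n_T(s) \to \infty$ a.s. for all $s \le K$. If one insists on the bare hypothesis $n_T \to \infty$, I would add the standing assumption that ages up to $K$ are observed infinitely often, which is exactly the support condition required in classical KM consistency (Stute--Wang type results).

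Combining the three steps, each $\widehat{h}_T(s) \to h(s)$ a.s., hence each $\widehat{S}_T(t) \to S(t)$ a.s., and so the maximum over the finitely many $t \in [K]$ of $|\widehat{F}_T(t) - F(t)|$ tends to $0$ almost surely.
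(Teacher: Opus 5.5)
Your proof is correct, and it takes a genuinely different route from the paper's.

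\textbf{What the paper does.} The paper proves the theorem by citation. It describes the censoring of each machine's last lifetime in a cycle as Type~I censoring independent of that lifetime. It then invokes the Glivenko--Cantelli theorem for the product-limit estimator under random censorship (Breslow--Crowley, Gill).

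\textbf{What you do.} You use that the supremum is over the finite set $[K]$ and factor $\widehat{S}_T(t)=\prod_{s\le t}\bigl(1-d_T(s)/n_T(s)\bigr)$. You prove each discrete hazard estimate consistent by a martingale strong law, in which each censoring time $C_i$ is fixed before its lifetime is drawn. You then check separately that every age $s\le K$ is at risk infinitely often, using $\sum_t \varepsilon/\sqrt{t}=\infty$, the conditional Borel--Cantelli lemma and $S(s-1)\ge f(K)>0$.

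\textbf{What each route buys.} Your argument matches the actual data-generating mechanism. The pairs $(L_i,C_i)$ here are not i.i.d.: the residual censoring time $k_t-\tau$ depends on earlier lifetimes of the same machine in the same cycle, and $k_t$ depends on past data through $\widehat{k}^*$. So the cited i.i.d.\ random-censorship results do not apply verbatim, whereas your argument needs only predictability of $C_i$. It also makes explicit a coverage condition the paper passes over. First, $n_T\to\infty$ alone is not enough: if $k_t=1$ from some cycle on, $n_T\to\infty$ but the estimate of $h(2)$ never changes again, so $\widehat{F}_T(2)$ need not converge to $F(2)$. Second, censoring at a fixed time $k$ only identifies $F$ on $[1,k]$. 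Consistency on all of $[K]$ genuinely comes from the exploration schedule, as you say. The paper's route buys brevity and, where its hypotheses hold, uniformity over a continuum, which is not needed here.

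\textbf{One caution on your middle step.} Azuma--Hoeffding with Borel--Cantelli applied to the raw sum only gives $d_T(s)-h(s)n_T(s)=o(n)$, where $n$ is the total number of lifetimes. That is too weak: for $s>k^*$ the risk set $n_T(s)$ grows only at the exploration rate, of order $\sqrt{T}$, while $n$ grows linearly. Use the self-normalized version you also mention. One option is Chow's theorem with the predictable normalizer $S(s-1)\sum_i\mathbf{1}\{C_i\ge s\}$, together with the fact that $n_T(s)$ divided by this normalizer tends to $1$. Another is an optional-skipping argument that makes the successive at-risk outcomes i.i.d.\ Bernoulli$(h(s))$.
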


The proof is in Appendix~\ref{app:thm7}. 

\begin{theorem}[\textit{Policy and cost convergence}]\label{thm:kmConvergence}
    Let $\widehat{k}_T^*$ denote the estimated optimal policy based on dataset $\mathcal{D}_T$ with $n_T$ observations, and let $k^*$ denote the true optimal policy. Under the conditions of Theorem \ref{thm:km_consistency}, as $n_T \to \infty$, $\mathbb{P}(\widehat{k}_T^* = k^*) \to 1$. Furthermore, the estimated cost converges, that is $|\widehat{c}(\widehat{k}_T^*) - c(k^*)| \xrightarrow{a.s.} 0$. 
\end{theorem}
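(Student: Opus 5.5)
The plan is a continuous-mapping argument. Theorem~\ref{thm:km_consistency} gives uniform convergence of $\widehat{F}_T$ to $F$ on the finite grid $[K]$. The map $F \mapsto (M(1),\ldots,M(K))$ is continuous, so $\widehat{c}$ converges uniformly to $c$. The strict gap at the unique minimizer $k^*$ then forces $\widehat{k}_T^*=k^*$ eventually. We work on the almost-sure event $\Omega_0$ on which $\sup_{t\in[K]}|\widehat{F}_T(t)-F(t)|\to 0$ as $n_T\to\infty$, which has probability one by Theorem~\ref{thm:km_consistency}. All subsequent statements are pathwise on $\Omega_0$.

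\textbf{Step 1: the renewal map.} On $\Omega_0$, $\widehat{f}(t)=\widehat{F}(t)-\widehat{F}(t-1)\to f(t)$ for each $t\in[K]$, using $\widehat{F}(0)=F(0)=0$. Next, show by strong induction on $t=1,\dots,K$ that $\widehat{M}(t)\to M(t)$. The recursion $\widehat{M}(t)=\widehat{F}(t)+\sum_{s=1}^{t}\widehat{f}(s)\widehat{M}(t-s)$ expresses $\widehat{M}(t)$ through $\widehat{F}(t)$, $\widehat{f}(1),\dots,\widehat{f}(t)$, and $\widehat{M}(0),\dots,\widehat{M}(t-1)$. This uses $\widehat{M}(0)=0$; the $s=t$ term carries $\widehat{M}(0)$, so no implicit equation arises. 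Sums and products of finitely many convergent sequences converge, which gives the induction. Alternatively, one can bound $|\widehat{M}(t)-M(t)|\le C_K\sup_{s\le K}|\widehat{F}(s)-F(s)|$ for a constant $C_K$ depending only on $K$ and $M(K)$. This is a quantitative Lipschitz estimate obtained by expanding the difference of products, using $\sum_s \widehat{f}(s)\le 1$ and the boundedness of $M(t-s)$.

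\textbf{Step 2: cost convergence and policy identification.} Since $\widehat{c}(k)-c(k)=c_fN(\widehat{M}(k)-M(k))/k$, Step 1 gives $\max_{k\in[K]}|\widehat{c}(k)-c(k)|\to 0$ on $\Omega_0$. Let $\Delta_{\min}:=\min_{k\ne k^*}\Delta_k>0$, which is positive by uniqueness of $k^*$ and finiteness of $[K]$. Once the uniform error is below $\Delta_{\min}/2$, every $k\neq k^*$ satisfies
\[
\widehat{c}(k) > c(k)-\Delta_{\min}/2 \ge c(k^*)+\Delta_{\min}/2 > \widehat{c}(k^*).
\]
Hence $\widehat{k}_T^*=k^*$ for all sufficiently large $n_T$ on $\Omega_0$. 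This is almost-sure eventual equality, which in particular yields $\mathbb{P}(\widehat{k}_T^*=k^*)\to1$ by dominated convergence of indicators. For the cost, write
\[
|\widehat{c}(\widehat{k}_T^*)-c(k^*)| \le \max_{k\in[K]}|\widehat{c}(k)-c(k)| + |c(\widehat{k}_T^*)-c(k^*)|.
\]
The first term tends to $0$, and the second is eventually $0$. A cleaner route for the cost is $|\min_k\widehat{c}(k)-\min_k c(k)|\le\max_k|\widehat{c}(k)-c(k)|$, which does not even need uniqueness.

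\textbf{Main obstacle.} The only delicate point is the interface with Theorem~\ref{thm:km_consistency}. Its conclusion is indexed by $n_T\to\infty$, so we must ensure that $n_T\to\infty$ almost surely as $T\to\infty$; this holds because $n_T\ge NT$. We also need $\widehat{F}_T$ to be defined at every $t\in[K]$, i.e.\ the risk set at each $t\le K$ is eventually nonempty. This requires every age up to $K$ to be observed infinitely often. I would secure that from the exploration schedule: $\sum_t\varepsilon/\sqrt{t}=\infty$, so arm $K$ is pulled infinitely often almost surely by the second Borel--Cantelli lemma, with independent randomization. Together with $P(L>K-1)>0$, this gives infinitely many lifetimes observed past age $K-1$. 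If Theorem~\ref{thm:km_consistency} already packages this condition, I would simply cite it; otherwise I would add this short lemma before Step 1.
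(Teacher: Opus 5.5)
Your proposal is correct and follows essentially the same route as the paper: induction on the renewal recursion gives $\widehat{M}_T(k)\to M(k)$ almost surely for each $k\in[K]$, hence $\widehat{c}_T\to c$ on the finite set $[K]$, and then the $\delta/2$ separation argument at the unique minimizer $k^*$ finishes both claims. Your additional points are valid refinements of details the paper leaves implicit in its appeal to Theorem~\ref{thm:km_consistency}: the bound $|\min_k\widehat{c}(k)-\min_k c(k)|\le\max_k|\widehat{c}(k)-c(k)|$ gives the cost claim without uniqueness, and you check that $n_T\ge NT$ and that the decaying exploration keeps the risk set at age $K$ growing.
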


The proof is in Appendix~\ref{app:thm8}. 

\subsection{Computational Complexity Analysis}\label{sec:comp_comp}

The four bandit algorithms (Algorithms~\ref{alg:ind_hoeffding} -- \ref{alg:corr_bernstein}) share the same asymptotic computational complexity. Each maintains two or three running scalars per arm: $(n_k, S_k)$ for Hoeffding and $(n_k, S_k, Q_k)$ for Bernstein, requiring $O(K)$ space. Per decision epoch, computing all confidence bounds and selecting the minimum costs $O(K)$; the correlated update (accumulating statistics for all $j \leq k_t$) adds $O(K)$ to time complexity per epoch. Thus total time complexity is $O(KT)$ for all four algorithms.

The Kaplan-Meier algorithm maintains vectors $c(\cdot)$, $F(\cdot)$, and $M(\cdot)$ of length $K$, requiring $O(K)$ space. Updating $c(\cdot)$ and $F(\cdot)$ costs $O(K)$, but computing $\widehat{M}(t)$ via the renewal equation requires an $O(K^2)$ convolution. The per-epoch cost is therefore $O(K^2)$, giving $O(K^2 T)$ total. Despite the higher cost, the KM algorithm correctly incorporates all censored observations and in practice identifies $k^*$ with far fewer cycles than the bandit methods.

Value iteration for the age-vector MDP (Algorithm~\ref{alg:value_iteration_avg}) operates on the state space $\mathcal{S} = \{0,\ldots,K\}^N$ of cardinality $(K+1)^N$. Computing $Q_0^{(k)}(s)$ at each state requires summing over all $2^N$ failure subsets at cost $O(N)$ each, giving a per-iteration cost of $O\left(N\, 2^N (K+1)^N\right)$ and space $O\left((K+1)^N\right)$. The time-elapsed MDP (Algorithm~\ref{alg:vi-te}), by contrast, operates on a state space of cardinality $K + 1$, giving time complexity $O(K^2)$ and space $O(K)$ regardless of the fleet size $N$. 

Table~\ref{tab:complexity} summarizes the time and space complexity of all methods (here $I$ is the number of iterations taken to converge). Despite not having a formal regret guarantee comparable to those of the bandit methods (see Section~\ref{sec:algos}), the Kaplan-Meier approach can exhibit strong empirical performance due to its efficient use of censored data for explicit distribution learning.

\begin{table}[h]
\centering
\caption{Computational and space complexity of block replacement
algorithms.}
\label{tab:complexity}
\begin{tabular}{lcc}
\hline
Algorithm & Time & Space \\
\hline
All Four Bandit Algorithms   & $O(KT)$   & $O(K)$ \\
Kaplan-Meier Renewal        & $O(K^2T)$ & $O(K)$ \\ 
Time-Elapsed MDP & $O(K^2)$ & $O(K)$ \\
Age-vector MDP & $O(N 2^N (K+1)^N \cdot I)$ & $O((K+1)^N)$ \\
\hline
\end{tabular}
\end{table}

The choice of algorithm depends on the specific problem constraints: while bandit-based approaches offer significant computational advantages for large-scale online learning ($T \gg K$), the Kaplan-Meier method remains competitive for moderate horizons where its superior sample efficiency outweighs its relative computational cost.

\section{Numerical Experiments}\label{sec:numeric}

We evaluate all five data-driven algorithms (Algorithms \ref{alg:ind_hoeffding} -- \ref{alg:survival_renewal}) on two lifetime distributions under a common parameter regime and compare it against the age-vector MDP benchmark (Algorithm~\ref{alg:value_iteration_avg}). Throughout this section we fix $N = 2 \text{ machines}, K = 12, c_b = 1.0, c_f = 2.6$, and consider lifetimes drawn from
\begin{enumerate}[label=(\roman*)]
  \item $L \sim 1 + \mathrm{Binomial}(10,\, 0.5)$, with support $\{1,\ldots,11\}$
    and mean $\mu = 6$;
  \item $L \sim 1 + \mathrm{Poisson}(4)$, with support $\{1, 2, \ldots\}$ and
    mean $\mu = 5$.
\end{enumerate}
Note that $c_b = 1 < 2c_f = 5.2$, so block replacement is strictly cheaper per machine than two simultaneous individual replacements, satisfying the assumption in Section~\ref{sec:setup}. The binomial distribution has bounded support, while the Poisson distribution has unbounded support; for the latter, the true renewal function $M(k)$ and cost curve $c(k)$ are computed using a PMF truncated at $K_{\mathrm{renew}} = 45$, which captures at least $99.9\%$ of the probability mass. 

Since $c_f$ and $N$ enter the cost function only through their product $c_f N$, our two-machine setup (which is computationally simple) with $c_f = 2.6$ is equivalent to an $N$-machine fleet with per-failure cost $5.2/N$, so the results scale directly to larger fleets. Our results therefore apply directly to larger fleets operating under a proportionally smaller per-failure cost. As long as $c_b$ and $c_f \cdot N$ are fixed, the true underlying cost structure will not change, but the MDP formulation will, as its state space grows as $(K + 1)^N$. 

\subsection{Age-vector MDP Formulation (Algorithm~\ref{alg:value_iteration_avg})}\label{sec:MDP_simul}

We implement Algorithm~\ref{alg:value_iteration_avg} for the two-machine MDP ($N = 2$) to illustrate the structural properties established in Theorem~\ref{thm:mdp_structure} and to compute the benchmark optimal average cost $g^*_{\mathrm{age}}$ against which the bandit and Kaplan--Meier algorithms are compared. The state space $\mathcal{S} = \{0,1,\ldots,12\}^2$ (with $|\mathcal{S}| = 169$ states), where each coordinate records the current age of a machine capped at $K = 12$. Value iteration is run with span-seminorm convergence tolerance $\varepsilon = 10^{-8}$. The resulting optimal policy is summarised in Table~\ref{tab:mdp_policy_combined} via its threshold function (Theorem~\ref{thm:mdp_structure}(d)).

\begin{table}[ht]
\centering
\caption{Threshold function $\tau^*(b)$ for the optimal MDP policy under Algorithm~\ref{alg:value_iteration_avg}. By Theorem~\ref{thm:mdp_structure}(c), the policy is symmetric, so $\tau^*_1(b) = \tau^*_2(a) =: \tau^*(b)$: block replacement is triggered whenever either machine's age reaches or exceeds $\tau^*(b)$, where $b$ is the other machine's current age. A threshold of 0 indicates that block replacement is triggered immediately regardless of the focal machine's age.}
\label{tab:mdp_policy_combined}
\small
\begin{tabular}{l *{13}{c}}
\toprule
Other machine's age $b$ & 0 & 1 & 2 & 3 & 4 & 5 & 6 & 7 & 8 & 9 & 10 & 11 & 12 \\
\midrule
$L \sim 1+\mathrm{Binomial}(10,0.5)$ & 4 & 4 & 3 & 2 & 0 & 0 & 0 & 0 & 0 & 0 & 0 & 0 & 0 \\
$L \sim 1+\mathrm{Poisson}(4)$       & 3 & 2 & 1 & 0 & 0 & 0 & 0 & 0 & 0 & 0 & 0 & 0 & 0 \\
\bottomrule
\end{tabular}
\end{table}

The optimal average costs are $g^*_{\mathrm{age}} \approx 0.3195$ for Binomial and $g^*_{\mathrm{age}} \approx 0.4886$ for Poisson. By Theorem~\ref{thm:mdp_structure}(c), the policy is symmetric in the two machine ages, so a single threshold function suffices: block replacement is triggered for machine $i$ as soon as its age reaches $\tau^*(b)$, where $b$ is the current age of the other machines. The threshold is non-increasing in $b$ (policy monotonicity), the function is identical for both machines (symmetry), and $\tau^*(b) = 0$ once the other machine is old enough (threshold structure collapses to immediate replacement).

The values of $\tau^*(b)$ for both the Binomial and Poisson case are presented in Table~\ref{tab:mdp_policy_combined}. Note that $\tau^* = 0$, meaning any age triggers block replacement. In both cases the threshold drops sharply to zero once the companion machine reaches a moderate age, and the high ratio $c_f N / c_b = 5.2$ makes proactive replacement attractive even for relatively young machines.

\subsection{Bandit and Survival Analysis based methods (Algorithms \ref{alg:ind_hoeffding} - \ref{alg:survival_renewal})}\label{sec:BanditKM_Simul}

We evaluate the four bandit algorithms (Algorithms~\ref{alg:ind_hoeffding}--\ref{alg:corr_bernstein}) and the Kaplan--Meier renewal algorithm (Algorithm~\ref{alg:survival_renewal}) under the same cost parameters and lifetime distributions as described above. Each algorithm operates over $T = 10000$ decision epochs, using the same sequence of underlying random failure times for all algorithms for a fair comparison. 

The true cost curves $c(k)$ for $k \in [K]$, shown in Figure~\ref{fig:trueCosts}, reveal the following optimal block replacement intervals:
\begin{enumerate}[label=(\roman*)]
  \item $k^* = 3$ for $L \sim 1 + \mathrm{Binomial}(10, 0.5)$,
    with $c(k^*) \approx 0.4282$;
  \item $k^* = 2$ for $L \sim 1 + \mathrm{Poisson}(4)$,
    with $c(k^*) \approx 0.7390$.
\end{enumerate}

\begin{figure}[h]
    \centering
    \includegraphics[width=1\linewidth]{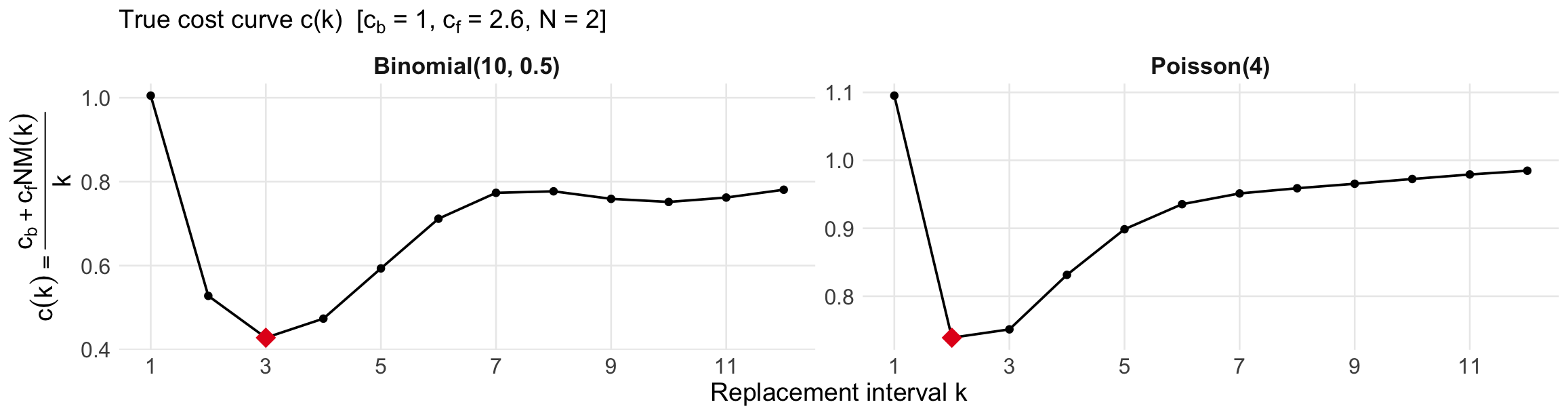}
    \caption{True long-run average cost $c(k)$ vs $k$ under each lifetime distribution. The optimal interval $k^*$ is marked with a red diamond.}
    \label{fig:trueCosts}
\end{figure}

For each distribution the true optimal interval $k^*$ and cost $c(k^*)$ are computed via the renewal equation~\eqref{eqn:renewal} using a PMF truncated at $K_{\mathrm{renew}} = 45$ to ensure accuracy under the Poisson tail. 

We have tested all five algorithms using simulations. The simulations need to run over long horizons for the long-run regret asymptotics to manifest itself. All five algorithms use a common random seed. The results are from a single run.

\begin{figure}[h]
    \centering
    \includegraphics[width=1\linewidth]{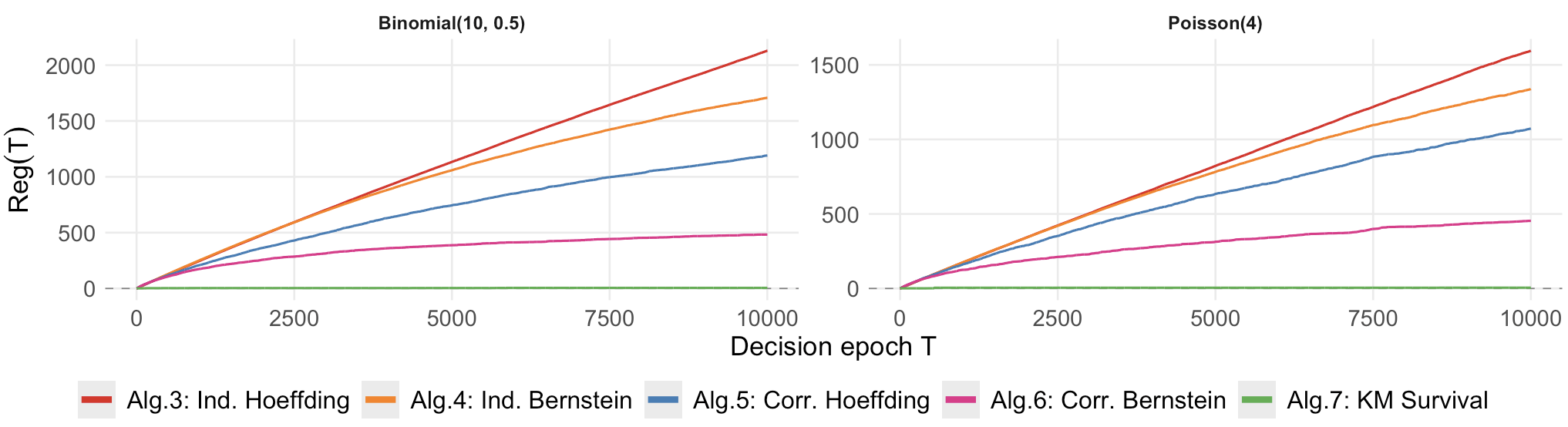}
    \caption{Cumulative regret Reg($T$) for Algorithms \ref{alg:ind_hoeffding}--\ref{alg:survival_renewal} over $T = 10,000$ decision epochs ($c_b = 1$, $c_f = 2.6$, $N = 2$, $K = 12$).}
    \label{regrets}    
    \label{fig:regret}
\end{figure}

\begin{figure}[h]
    \centering
    \includegraphics[width=1\linewidth]{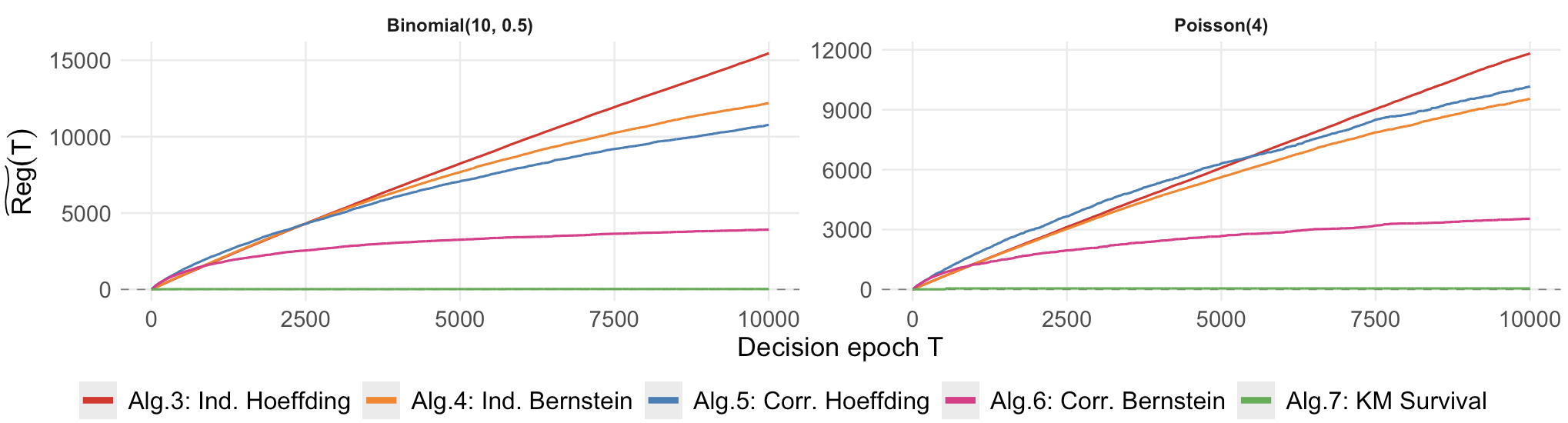}
    \caption{Cumulative regret $\widetilde{\text{Reg}}(T)$ for Algorithms \ref{alg:ind_hoeffding}--\ref{alg:survival_renewal} over $T = 10,000$ decision epochs ($c_b = 1$, $c_f = 2.6$, $N = 2$, $K = 12$).}
    \label{fig:regretTilde}
\end{figure}

Figure \ref{fig:regret} reports the \emph{per-cycle} cumulative regret (Equation~\ref{eqn:regDecomp}) and Figure \ref{fig:regretTilde} the \emph{total-cost} cumulative regret (Equation~\ref{eqn:regTilDecomp}) over $T = 10,000$ epochs across the five algorithms and two distributions. All four bandit algorithms exhibit sub-linear regret growth, consistent with the established upper bounds for Algorithms~\ref{alg:ind_hoeffding}--\ref{alg:corr_bernstein}. Algorithms \ref{alg:corr_hoeffding} and \ref{alg:corr_bernstein} (correlated arm bandits) accumulate substantially less regret than Algorithms \ref{alg:ind_hoeffding} and \ref{alg:ind_bernstein} (independent arm bandits), confirming the theoretical $O((K - k^*)\log T)$ versus $O((K-1)\log T)$ separation. Algorithms \ref{alg:corr_hoeffding} and \ref{alg:corr_bernstein} exploit empirical variance information, yielding tighter confidence bounds and faster arm elimination when the reward variance differs across arms. 

The cumulative regret plots reveal a clear performance ordering: among the bandit algorithms, the correlated Bernstein LCB (Algorithm~\ref{alg:corr_bernstein}) achieves the lowest regret, consistent with its $O((K-k^*)\log T)$ guarantee and tighter variance-aware confidence bounds; the Kaplan--Meier algorithm (Algorithm~\ref{alg:survival_renewal}) outperforms all bandit methods overall. By explicitly learning $F$ from censored observations and computing $c(k)$ via the renewal equation, the KM algorithm identifies $k^*$ rapidly after the cold-start phase and thereafter incurs near-zero incremental regret. Since operating a system for $T = 10{,}000$ cycles to achieve asymptotic performance is rarely feasible in practice, we examine arm-selection behaviour of these two best-performing algorithms over the first $T = 50$ decision epochs; Figure~\ref{fig:decisions} shows the frequency of each chosen interval $k$ over this short horizon.

\begin{figure}[h]
    \centering
    \includegraphics[width=1\linewidth]{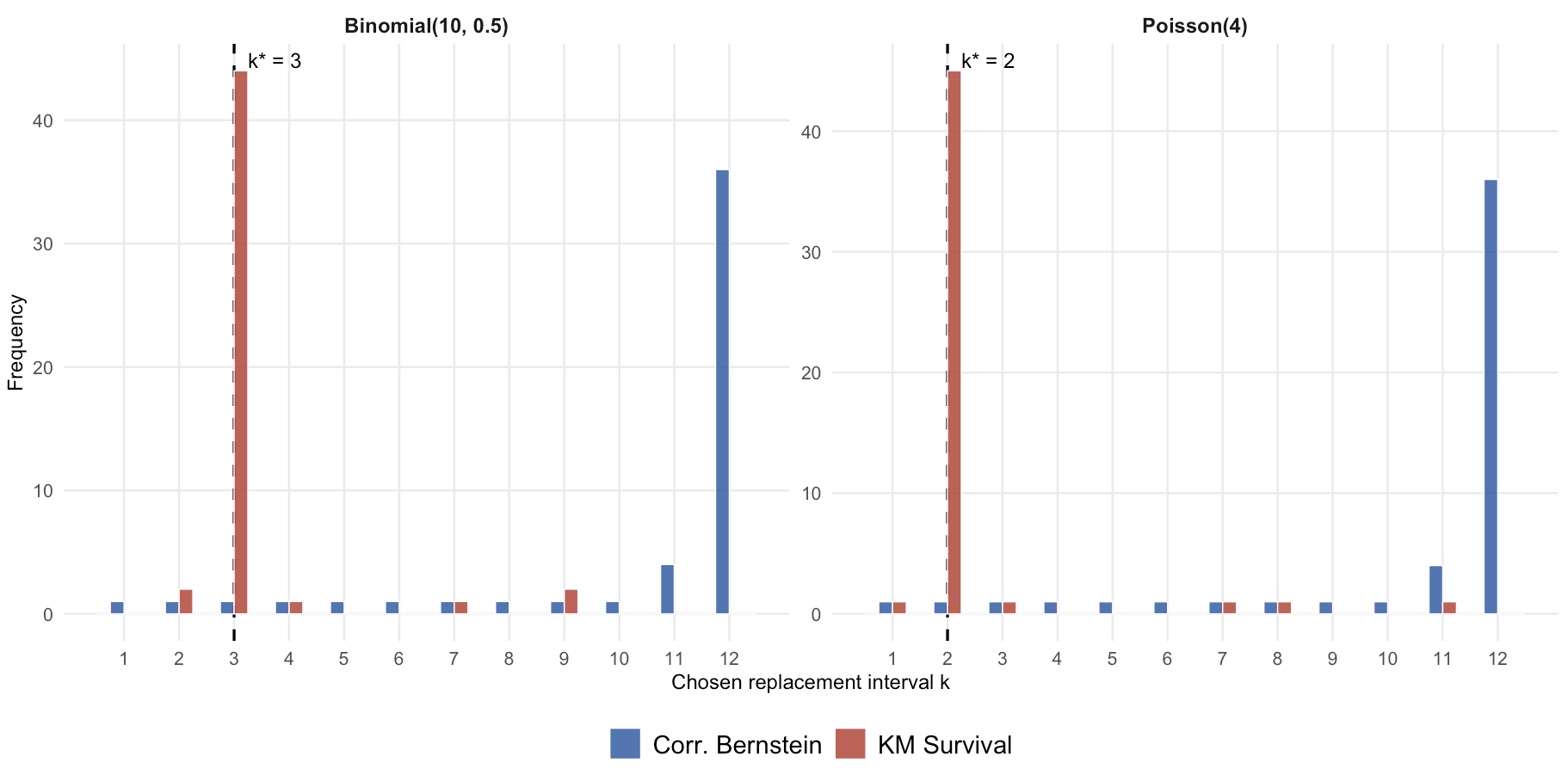}
    \caption{Histogram of replacement intervals $k$ selected by the Correlated Bernstein LCB (Algorithm~\ref{alg:corr_bernstein}) and the Kaplan--Meier Renewal (Algorithm~\ref{alg:survival_renewal}) algorithms over the first $T = 50$ decision epochs, under the Binomial$(10,0.5)$ (left) and Poisson$(4)$ (right) lifetime distributions. The dashed vertical line marks the optimal interval $k^*$.}
    \label{fig:decisions}
\end{figure}

The concentration of the correlated Bernstein algorithm on large arms reflects necessary exploration: the $O((K-k^*)\log T)$ regret guarantee requires every suboptimal arm above $k^*$ to be pulled $O(\log T)$ times to keep confidence bounds calibrated, and at $T = 50$ this amounts to visiting each such arm roughly $\log(50) \approx 3.9$ times. This cost is bounded and vanishes as a fraction of total cost as $T \to \infty$. The KM algorithm closes in on the optimal arm surprisingly quickly, using the optimal arm most of the time. We find this insight most revealing and useful in practice. 

\subsection{Comparison of Data-Driven algorithms against MDP benchmark}\label{sec:MDPvsDD_Comp}

The MDP formulation in Section~\ref{sec:MDP} optimises over all age-dependent Markovian policies using complete knowledge of the lifetime distribution $F$, providing a lower bound on the long-run average cost. The data-driven algorithms of Section~\ref{sec:bandit} and \ref{sec:Ren+KM} operate without knowledge of $F$ and are restricted to block replacement policies. The gap between their empirical costs and $g^*_{\mathrm{age}}$ can be decomposed as $c(\widehat{k}^*(T)) - g^*_{\mathrm{age}} = \underbrace{c(k^*) - g^*_{\mathrm{age}}}_{\delta_{\mathrm{struct}}} + \underbrace{c(\widehat{k}^*(T)) - c(k^*)}_{\delta_{\mathrm{learn}}(T)}$, where $\widehat{k}^*(T) := \mathrm{mode}_{t \in [T-W, T]} k_t$ is the arm most frequently selected by the algorithm in the terminal window $[T-W, T]$ with $W = 1000$ cycles, $\delta_{\mathrm{struct}}$ is the \emph{structural gap} incurred by restricting to block policies, and $\delta_{\mathrm{learn}}(T)$ is the \emph{learning gap} due to not knowing $F$. The intermediate value $c(k^*)$ is the optimal cost of the time-elapsed MDP (Section~\ref{sec:time-elapsed-mdp}), computed analytically in $O(K^2)$ operations via Algorithm~\ref{alg:vi-te}; no simulation of the time-elapsed MDP is required.

All five algorithms identify $\widehat{k}^*(T) = k^*$ by the end of the horizon under both distributions, yielding $\delta_{\mathrm{learn}}(T) = 0\%$ in every case. The costs at termination are $c(\widehat{k}^*) = c(k^*) \approx 0.4646$ (Binomial) and $c(\widehat{k}^*) = c(k^*) \approx 0.8309$ (Poisson), so the entire suboptimality relative to $g^*_{\mathrm{age}}$ is structural. Specifically, $\delta_{\mathrm{struct}} \approx 0.1166 \ (33.5\%\ \text{of}\ g^*_{\mathrm{age}})$ for Binomial and $\delta_{\mathrm{struct}} \approx 0.2780 \ (50.3\%\ \text{of}\ g^*_{\mathrm{age}})$ for Poisson. Even an oracle with full knowledge of $F$ that always selects the best fixed-interval block policy pays a cost above the best age-dependent policy.

\section{Conclusion}
\label{sec:conclusion}

We have developed and analysed a family of data-driven algorithms for block replacement scheduling of $N$ identical machines with unknown lifetime distribution $F$. The independent and correlated LCB bandit algorithms (Algorithms~\ref{alg:ind_hoeffding}--\ref{alg:corr_bernstein}) achieve $O(\log T)$ regret, matching the \citet{LaiRobbins} lower bound. The correlated variants require only $O(1)$ direct pulls of arms $k < k^*$ by exploiting the nested observation structure unique to block replacement. The Kaplan--Meier renewal algorithm (Algorithm~\ref{alg:survival_renewal}) explicitly estimates $F$ from the censored operational record and identifies $k^*$ with near-zero incremental regret after a short exploration phase, empirically dominating all bandit algorithms at long horizons. At short horizons of $T = 50$ decision epochs, the survival analysis based algorithm already concentrates its selections near $k^*$, while the correlated Bernstein algorithm spreads pulls across larger arms due to its mandatory exploration phase, a transient cost that vanishes as $T \to \infty$. The underlying average-cost MDP admits a monotone threshold policy under IFR lifetime distributions (Theorem~\ref{thm:mdp_structure}), providing a gold-standard benchmark $g^*_{\mathrm{age}}$.

The framework assumes stationary IID machine lifetimes; real systems may exhibit load-dependent degradation or temporal trends. Establishing a regret bound for Algorithm~\ref{alg:survival_renewal} and developing a Thompson sampling variant that places a prior on $F$ and updates via censored observations are natural extensions of the proposed approaches. Bayesian approaches, including Thompson sampling (\citet{Thompson1933, Russo2018}) and related posterior sampling methods, which places a prior on the arm reward distributions and selects arms by posterior sampling, offer competitive empirical performance in standard bandit settings and represent a natural alternative to the LCB framework developed here.

Extending the framework to lifetimes that depend on environmental covariates or accumulated degradation, for example by incorporating a Cox proportional hazards model into the KM step, or by adopting contextual bandit algorithms, would substantially broaden the practical applicability of the proposed framework.

The structural gap $\delta_{\mathrm{struct}} = c(k^*) - g^*_{\mathrm{age}}$ arises because fixed-interval block replacement discards the age information available at each decision point.  A natural intermediate policy is the $(k^*, \ell^*)$ \emph{hybrid policy}: block-replace only machines with ages at least $\ell^*$ every $k^*$ periods. A data-driven algorithm must learn both $k^*$ and $\ell^*$ from censored observations. The reduction in $\delta_{\mathrm{struct}}$ achievable by the optimal $\ell^*$ quantifies the value of incorporating simple age-dependent information into an otherwise schedule-driven policy.

The present framework assumes all $N$ machines are statistically identical.  When machines have different age profiles or lifetime distributions, as arises in mixed-age fleets following partial replacements, the bandit formulation must be extended to account for the differing contribution of each machine to the block replacement cost, a setting with closer connections to the restless bandit literature.

% NOTE: Use the Code and Data Disclosure section to provide instructions for where your code and data, along with README file, can be found. If the paper received an exemption, then state the reason the exemption was granted.

\newpage

\appendix

\section*{Appendix 1: Proofs of Theorems and Propositions}

    \section{Proof of Theorem~\ref{thm:mdp_structure}}  \label{app:thm2}
\begin{enumerate}[label = (\alph*)]
    \item  We prove by induction that the iterates $\{w^{(n)}\}$ satisfy two properties simultaneously:
\begin{itemize}
  \item[(A)] \emph{Monotonicity:} For every $s \in \mathcal{S}$ and every $i \in [N]$ with $s^i \leq K-1$, $w^{(n)}(s + e_i) \geq w^{(n)}(s)$.
  \item[(B)] \emph{Lipschitz bound:} For every $r \in \mathcal{S}$ and $i \in [N]$, $w^{(n)}(r) - w^{(n)}(r^{\{i\}}) \leq c_f$, where $r^{\{i\}}$ denotes $r$ with coordinate $i$ reset to $0$.
\end{itemize}
The restriction $s^i \leq K-1$ in (A) is necessary to ensure $s + e_i \in \mathcal{S}$. 

\emph{Base case.} Both properties hold trivially for $w^{(0)} \equiv 0$.

\emph{Inductive step.} Assume (A) and (B) hold for $w^{(n)}$ for some $n \in \N$. Fix $i \in [N]$. For brevity, write $V^+_{A'} := c_f + w^{(n)}(s^{[A' \cup \{i\}]})$ for the value following machine $i$'s failure, and $V^0_{A'} := w^{(n)}(s^{[A']})$ for the value following its survival. Define \[\Phi_{A'}(s) \;:=\; h(s^i{+}1)\,V^+_{A'} \;+\; \bar{h}(s^i{+}1)\,V^0_{A'}.\]  Conditioning on which subset $A'$ of the other $N-1$ machines fail,
\begin{align}\label{eq:Q0split}
  Q_0^{(n)}(s)
    &= c_f \sum_{j \neq i} h(s^j{+}1) \\
    &\quad + \sum_{A' \subseteq [N]\setminus\{i\}}
             \omega_{A'}(s)\,\Phi_{A'}(s). \nonumber
\end{align}
where $\omega_{A'}(s) := \prod_{j\in A'} h(s^j{+}1)\prod_{j\notin A',\, j\neq i} \bar{h}(s^j{+}1) \geq 0$ with $\sum \limits_{A'}\omega_{A'}(s) = 1$. In $s^{[A']}$, machine $i$ survives and its age advances to $s^i + 1$; in $s^{[A'\cup\{i\}]}$, machine $i$ fails and resets to age $0$.

\medskip
\emph{Proof of (A) for $w^{(n+1)}$.}
Fix $s \in \mathcal{S}$ with $s^i \leq K-1$, so that $s + e_i \in \mathcal{S}$. 

Replace the lead-in and equation (14) with:

Let $\tilde{s}^{[A']}$ denote the next state from $s+e_i$ when machines in $A'$ fail and machine $i$ survives: $\tilde{s}^{[A']}$ has coordinate $i$ equal to $s^i+2$ (with zero weight when $s^i+1=K$), coordinate $j=0$ for $j \in A'$, and coordinate $j = s^j+1$ for $j \notin A' \cup \{i\}$. Note $\tilde{s}^{[A']} \succeq s^{[A']}$ coordinatewise and $(\tilde{s}^{[A']})^{\{i\}} = s^{[A'\cup\{i\}]}$. Note that $\delta^i_h := h(s^i{+}2)-h(s^i{+}1) \geq 0$ by the IFR assumption. Define $R_1(A') := c_f + w^{(n)}(s^{[A'\cup\{i\}]}) - w^{(n)}(\tilde{s}^{[A']})$ and $R_2(A') := w^{(n)}(\tilde{s}^{[A']}) - w^{(n)}(s^{[A']})$. 
Then
\begin{align}
  &Q^{(n)}_0(s+e_i) - Q^{(n)}_0(s) \notag \\
    &= \sum_{A'} \omega_{A'}(s)\left[\delta^i_h \cdot R_1(A') + \bar{h}(s^i{+}1)\cdot R_2(A')\right]. \label{eq:Qdiff}
\end{align}
The term $R_2(A') \geq 0$ by inductive hypothesis~(A), since $\tilde{s}^{[A']} \succeq s^{[A']}$. For $R_1(A')$, applying~(B) to $\tilde{s}^{[A']}$ at coordinate $i$ gives $w^{(n)}(\tilde{s}^{[A']}) - w^{(n)}((\tilde{s}^{[A']})^{\{i\}})  = w^{(n)}(\tilde{s}^{[A']}) - w^{(n)}(s^{[A'\cup\{i\}]}) \leq c_f$, so $R_1(A') \geq 0$. Since $\delta^i_h \geq 0$, both terms are non-negative, giving $Q^{(n)}_0(s+e_i) \geq Q^{(n)}_0(s)$.

\medskip
\emph{Proof of (B) for $w^{(n+1)}$.}
Fix $r \in \mathcal{S}$ and $i \in [N]$. Since $Q_0^{(n)}(r) \geq Q_0^{(n)}(r^{\{i\}})$ by (A) (as $r^{\{i\}} + e_i$ has coordinate $i$ equal to $1 \leq K$), and $\min(a, c) - \min(b, c) \leq a - b$ for any reals $a \geq b$:
\[w^{(n+1)}(r) - w^{(n+1)}(r^{\{i\}}) \leq Q_0^{(n)}(r) - Q_0^{(n)}(r^{\{i\}}).\]
We apply~\eqref{eq:Q0split} to both $r$ and $r^{\{i\}}$ and subtract. When machine $i$ fails, both $r$ and $r^{\{i\}}$ produce next-state coordinate $i = 0$, so $r^{[A'\cup\{i\}]} = (r^{\{i\}})^{[A'\cup\{i\}]}$. When machine $i$ survives from $r$ it ages to $r^i+1$; from $r^{\{i\}}$ it ages to $1$. Define $B_1(A') := c_f + w^{(n)}(r^{[A'\cup\{i\}]}) - w^{(n)}((r^{\{i\}})^{[A']})$ and $B_2(A') := w^{(n)}(r^{[A']}) - w^{(n)}((r^{\{i\}})^{[A']})$.

By inductive hypothesis~(A), $r^{[A']} \succeq (r^{\{i\}})^{[A']}$ at coordinate $i$ and $(r^{\{i\}})^{[A']} \succeq r^{[A'\cup\{i\}]}$ at coordinate $i$, so both $B_1(A') \geq 0$ and $B_2(A') \geq 0$. Applying~(B) to $(r^{\{i\}})^{[A']}$ at coordinate $i$ gives $B_1(A') \leq c_f$, and applying~(B) to $r^{[A']}$ at coordinate $i$ gives $B_2(A') \leq c_f$. Since $h(r^i+1) - h(1) \geq 0$ by IFR,
\begin{align}
  &Q^{(n)}_0(r) - Q^{(n)}_0(r^{\{i\}}) \notag\\
    &= \sum_{A'} \omega_{A'}(r)\left[
        \left(h(r^i{+}1)-h(1)\right) B_1(A') \right. \notag\\
    &\hspace{3.5em}\left.
      +\bar{h}(r^i{+}1)\, B_2(A')
      \right].
\end{align}
Using $\sum_{A'} \omega_{A'} = 1$, $\bar{h}(r^i+1) \leq 1$, and the bounds $B_1, B_2 \leq c_f$, we obtain \[Q^{(n)}_0(r) - Q^{(n)}_0(r^{\{i\}}) \leq c_f\bigl(h(r^i{+}1)-h(1)+1-h(r^i{+}1)\bigr) = c_f(1-h(1)) \leq c_f,\] establishing~(B) for $n+1$.

\medskip
\emph{Transfer to $w^*$.} Under the unichain condition (Theorem~\ref{thm:value_iter_convergence}), $\mathrm{sp}(w^{(n+1)} - w^{(n)}) \to 0$. Since $w^{(n)}(\mathbf{0}) = 0$ for all $n$, the sequence $\{w^{(n)}\}$ is uniformly bounded and converges pointwise to $w^*$. Properties (A) and (B) are preserved under pointwise limits, so $w^*$ satisfies:
\begin{itemize}
  \item[(A)] $w^*(s+e_i) \geq w^*(s)$ for all $s$ with $s^i \leq K-1$, i.e., $w^*$ is non-decreasing in each coordinate.
  \item[(B)] $w^*(r) - w^*(r^{\{i\}}) \leq c_f$ for all $r \in \mathcal{S}$, $i \in [N]$.
\end{itemize}
This completes the proof of part~(a).

\item Define the advantage function  $\Delta({s}) := Q_0^*({s}) - Q_1^*({s})$, where $Q_1^*({s}) = c_b +  h^*({0})$ is constant in ${s}$. The optimal policy satisfies $\pi^*({s}) = 1$  if and only if $\Delta({s}) \geq 0$. Since $Q_0^*({s}+{e}_i) \geq Q_0^*({s})$ for all $i \in [N]$, we have $Q_0^*({s})$ is non-decreasing in each  coordinate, and hence $\Delta({s})$ is non-decreasing in each coordinate. If $\pi^*({s}) = 1$ then  $\Delta({s}) \geq 0$, and for any ${s}' \geq {s}$,  $\Delta({s}') \geq \Delta({s}) \geq 0$, so $\pi^*({s}') = 1$.

\item By symmetry of the problem setup, permuting machines $1, 2, \cdots, N$ yields an equivalent MDP. Hence both the value function and optimal policy inherit this symmetry.

\item Fix $i \in [N]$ and ${s}_{-i} \in \{0, 1, \cdots, K\}^{N - 1}$. Define $\tau^*_i({s}_{-i}) := \min\{s^i \in \{0,\ldots,K\} : \pi^*(s^i, {s}_{-i}) = 1\}$, setting $\tau^*_i({s}_{-i}) = \infty$ if no such $s^i$ exists. If $s^i \geq \tau^*_i({s}_{-i})$, then $(s^i, {s}_{-i}) \geq (\tau^*_i({s}_{-i}), {s}_{-i})$ coordinatewise and $\pi^*(\tau^*_i({s}_{-i}), {s}_{-i}) = 1$ by definition, so Theorem~\ref{thm:mdp_structure}(b) gives $\pi^*(s^i, {s}_{-i}) = 1$. Conversely, if $s^i < \tau^*_i({s}_{-i})$, then $\pi^*(s^i, {s}_{-i}) = 0$ by minimality of $\tau^*_i({s}_{-i})$. This establishes the threshold representation.
    
For monotonicity, suppose ${s}_{-i}' \geq {s}_{-i}$ componentwise and $\tau^*_i({s}_{-i}) < \infty$. Setting $s^i_0 := \tau^*_i({s}_{-i})$, we have $\pi^*(s^i_0, {s}_{-i}) = 1$ and $(s^i_0, {s}_{-i}') \geq (s^i_0, {s}_{-i})$ coordinatewise, so Theorem~\ref{thm:mdp_structure}(b) gives $\pi^*(s^i_0, {s}_{-i}') = 1$, hence $\tau^*_i({s}_{-i}') \leq s^i_0 = \tau^*_i({s}_{-i})$. \hfill $\blacksquare$
    \end{enumerate}

\section{Proof of Theorem~\ref{thm:corr_hoeff}} \label{app:thm5}
Using the regret decomposition in Equation~\ref{eqn:regDecomp}, it suffices to show that $\E[n_k(T)] = O(1)$ for all $k < k^*$. Fix $k < k^*$. Since $k^*$ is assumed to be unique, $\Delta_k = c(k) - c(k^*) > 0$. Define the critical sample size $n_0 = \lceil 16b^2\Delta_k^{-2}\log(KT^2)\rceil$ where $b = c_f N$. By Hoeffding's inequality, for any $j \in [K]$ with $m_j(t) \geq n_0$ and $m_{k^*}(t) \geq n_0$, \[\P\left(\left|\widehat{c}_j^{(m)} - c(j)\right| \geq \tfrac{\Delta_k}{4}\right) \leq 2\exp\left(-\tfrac{m\Delta_k^2}{8b^2}\right) \leq \frac{2}{KT^2}.\]  Define the event $\mathcal{G}_t = \left\{\widehat{c}_k^{(m)}(t) \leq c(k) + \Delta_k/4\right\} \cap \left\{\widehat{c}_{k^*}^{(m)}(t) \geq c(k^*) - \Delta_k/4\right\}$. By union bound, $\mathbb{P}(\mathcal{G}_t^c) \leq 4/(KT^2)$. For $m \geq n_0$, \[b\sqrt{\tfrac{2\log t}{m}} \leq b\sqrt{\tfrac{2\log(KT^2)}{n_0}} = b\sqrt{\tfrac{2\log(KT^2)}{16b^2\Delta_k^{-2}\log(KT^2)}} = \tfrac{\Delta_k}{4}.\] 
On $\mathcal{G}_t$, the LCB values therefore satisfy (i) $\mathrm{LCB}_k^{CH}(t) \geq c(k) - \tfrac{\Delta_k}{4} - \tfrac{\Delta_k}{4} = c(k) - \tfrac{\Delta_k}{2}$ and also (ii) $\mathrm{LCB}_{k^*}^{CH}(t) \leq c(k^*) + \tfrac{\Delta_k}{4} + \tfrac{\Delta_k}{4} = c(k^*) + \tfrac{\Delta_k}{2}$. Since $c(k)-c(k^*)=\Delta_k$, by (i) and (ii), we have \[\mathrm{LCB}_k^{CH}(t) \geq c(k^*)+\tfrac{\Delta_k}{2} > \mathrm{LCB}_{k^*}^{CH}(t),\] so arm $k$ is not selected at epoch $t$. Arm $k$ can only be pulled when $m_k(t)<n_0$, or $m_{k^*}(t)<n_0$, or $\mathcal{G}_t^c$ occurs. Thus, \[\E[n_k(T)] \leq 2n_0 + \sum \limits_{t=1}^T \P(\mathcal{G}_t^c) \leq 2n_0 + \frac{4}{K}\sum \limits_{t=1}^\infty \frac{1}{t^2} = 2n_0 + \frac{2\pi^2}{3K} = O(1).\]

\section{Proof of Theorem~\ref{thm:corr_bern}}\label{app:thm6}     As in the proof of the previous theorem, it suffices to show that $\E[n_k(T)] = O(1)$ for all $k < k^*$. Fix $k < k^*$. Since $k^*$ is unique, $\Delta_k = c(k) - c(k^*) > 0$. Let $\sigma_k^2 = \text{Var}\left(\frac{C^{(k)}}{k}\right)$ and $\sigma_{k^*}^2 = \text{Var}\left(\frac{C^{(k^*)}}{k^*}\right)$ denote the variances, with $\sigma_k^2, \sigma_{k^*}^2 \leq b^2$ where $b = c_f N$.

Define $n_0 = \left\lceil\max\left\{ \frac{96\sigma_k^2}{\Delta_k^2}\log(KT^2), \frac{48b}{\Delta_k}\log(KT^2) \right\}\right\rceil$ and $n_{0,k^*} = \left\lceil\max\left\{ \frac{96\sigma_{k^*}^2}{\Delta_k^2}\log(KT^2),\frac{48b}{\Delta_k}\log(KT^2) \right\}\right\rceil$ to be the critical sample sizes. By Bernstein's inequality, for $m \geq n_0$ and any arm $j$, \[\P\left(\left|\widehat{c}_j^{(m)}-c(j)\right|\geq\frac{\Delta_k}{4}\right) \leq 2\exp \left(-\frac{m\Delta_k^2/16}{2\sigma_j^2 + b\Delta_k/6}\right) \leq \frac{2}{KT^2}.\] Moreover, for $m \geq n_0$, by Bernstein's inequality, $\P\left((\widehat{\sigma}_j^{(m)})^2 \geq 2\sigma_j^2\right) \leq \frac{2}{KT^2}$. 

Define \[\mathcal{G}_t := \Bigl\{|\widehat{c}_k^{(m)}(t)-c(k)|\leq\tfrac{\Delta_k}{4}\Bigr\} \cap \Bigl\{|\widehat{c}_{k^*}^{(m)}(t)-c(k^*)|\leq\tfrac{\Delta_k}{4}\Bigr\} \cap \Bigl\{(\hat\sigma_k^{(m)})^2(t)\leq 2\sigma_k^2\Bigr\} \cap \Bigl\{(\hat\sigma_{k^*}^{(m)})^2(t)\leq 2\sigma_{k^*}^2\Bigr\}.\] When $m_k(t)\geq n_0$ and $m_{k^*}(t)\geq n_{0,k^*}$, the union bound gives $\P(\mathcal{G}_t^c)\leq 8/(KT^2)$. Since $(\hat\sigma_{k^*}^{(m)})^2(t) \leq 2\sigma_{k^*}^2$ on $\mathcal{G}_t$ and $m_{k^*}(t)\geq n_{0,k^*}\geq 2$, we have \[\mathrm{LCB}_{k^*}^{CB}(t) \leq c(k^*) + \frac{\Delta_k}{4} - \sqrt{\frac{2.4\cdot 2\sigma_{k^*}^2\log t}{n_{0,k^*}}} - \frac{3.6b\log t}{n_{0,k^*}}.\] By definition of $n_{0,k^*}$, $\sqrt{4.8\sigma_{k^*}^2\log t / n_{0,k^*}}\leq\sqrt{4.8\sigma_{k^*}^2/96\sigma_{k^*}^2} \cdot\Delta_k = \Delta_k/\sqrt{20}$, and $3.6b\log t/n_{0,k^*}\leq 3.6b/(48b)\cdot\Delta_k = \Delta_k/\sqrt{50}$ (both uniformly in $t\leq T$). Hence \[\mathrm{LCB}_{k^*}^{CB}(t) \leq c(k^*) + \frac{\Delta_k}{4} - \frac{\Delta_k}{\sqrt{20}} - \frac{\Delta_k}{\sqrt{50}}  \leq c(k^*) + \frac{\Delta_k}{4} - \frac{\Delta_k}{5} < c(k^*) + \frac{\Delta_k}{8}.\]

Since $\left(\hat\sigma_k^{(m)}\right)^2(t)\leq 2\sigma_k^2\leq b^2$ on $\mathcal{G}_t$ and $\widehat{c}_k^{(m)}(t)\geq c(k)-\Delta_k/4$, we have \[\mathrm{LCB}_k^{CB}(t) \geq c(k) - \frac{\Delta_k}{4} - \sqrt{\frac{2.4\cdot 2\sigma_k^2\log t}{m_k(t)}} - \frac{3.6b\log t}{m_k(t)}.\] Even if $m_k(t) = 1$ (Hoeffding bound applies), $\mathrm{LCB}_k^{CB}(t) \geq c(k) - \Delta_k/4 - b\sqrt{2\log t}$, and arm $k$ is not selected once $n_0$ is reached regardless.  For $m_k(t) \geq n_0$, a similar argument gives \[\mathrm{LCB}_k^{CB}(t) \geq c(k) - \frac{\Delta_k}{4} - \frac{\Delta_k}{5} > c(k) - \frac{\Delta_k}{2} = c(k^*) + \frac{\Delta_k}{2}.\]
Thus, we get $\mathrm{LCB}_{k^*}^{CB}(t) < c(k^*)+\Delta_k/8 < c(k^*)+\Delta_k/2 < \mathrm{LCB}_k^{CB}(t)$, so arm $k$ is not selected. Thus arm $k$ can only be pulled when $m_k(t) < n_0$, or $m_{k^*}(t) < n_{0,k^*}$, or $\mathcal{G}_t^c$ holds.  Hence \[\E[n_k(T)] \leq n_0 + n_{0,k^*} + \sum \limits_{t=1}^T\P(\mathcal{G}_t^c) \leq n_0 + n_{0,k^*} + \frac{8}{K}\cdot\frac{\pi^2}{6} = O(1).\] \hfill $\blacksquare$

\section{Proof of Theorem~\ref{thm:km_consistency}}\label{app:thm7}

Conditional on the chosen policy $k_t$ and the failure history up to $\tau_{Y_t^j}^{j,t}$, the residual lifetime $L_{Y_t^j + 1}^{j,t}$ has the same distribution $F$ as the original lifetime, and is independent of the deterministic quantity $C^{j,t} = k_t - \tau_{Y_t^j}^{j,t}$. This corresponds to Type I censoring with a fixed censoring time, which trivially satisfies the independent censoring assumption. Thus the uniform strong consistency follows from the Glivenko-Cantelli theorem for randomly right-censored data (see \citet{breslow_crowley_1974} and \citet{gill_1983}), and hence $\sup_{t \in [K]} |\widehat{S}_T(t) - S(t)| \xrightarrow{a.e.} 0$. Since $\widehat{F}_T(t) = 1 - \widehat{S}_T(t)$, uniform strong consistency of $\widehat{F}_T$ follows immediately. \hfill $\blacksquare$

\section{Proof of Theorem~\ref{thm:kmConvergence}}\label{app:thm8}

We first show that $\widehat{c}_T(k) \xrightarrow{a.s.} c(k)$ as $T \to \infty$ for every $k \in [K]$. For pointwise convergence of $\widehat{M}_T$, we proceed by induction on $k$.  The base case $\widehat{M}_T(0) = 0 = M(0)$ holds by definition.  Suppose $\widehat{M}_T(j) \xrightarrow{a.s.} M(j)$ for all $j < k$. The estimated renewal function satisfies \[\widehat{M}_T(k) = \widehat{F}_T(k) + \sum \limits_{j=1}^{k} \widehat{f}_T(j)\,\widehat{M}_T(k - j).\] By Theorem~\ref{thm:km_consistency}, $\widehat{F}_T(k) \xrightarrow{a.s.} F(k)$, so \[\widehat{f}_T(j) = \widehat{F}_T(j) - \widehat{F}_T(j-1) \xrightarrow{a.s.} f(j).\] Combined with the inductive hypothesis $\widehat{M}_T(k - j) \xrightarrow{a.s.} M(k - j)$, each of the finitely many terms converges almost surely, giving $\widehat{M}_T(k) \xrightarrow{a.s.} M(k)$. Thus for each $k \in [K]$, \[\widehat{c}_T(k) = \frac{c_b + c_f N\widehat{M}_T(k)}{k} \xrightarrow{a.s.} \frac{c_b + c_f N M(k)}{k} = c(k).\] Since $k^*$ is assumed to be unique, $\delta := \min_{k \neq k^*} \Delta_k > 0$.  For all sufficiently large $n_T$, $|\widehat{c}_T(k) - c(k)| < \delta/2$ for every $k \in [K]$, whence $\widehat{c}_T(k) > \widehat{c}_T(k^*)$ for all $k \neq k^*$ and thus $\widehat{k}_T^* = k^*$.  The almost-sure cost convergence $|\widehat{c}_T(\widehat{k}_T^*) - c(k^*)| \to 0$ follows immediately. \hfill $\blacksquare$

% Appendix here
% Options are (1) APPENDIX (with or without general title) or
%             (2) APPENDICES (if it has more than one unrelated sections)
% Outcomment the appropriate case if necessary
%
% \begin{APPENDIX}{<Title of the Appendix>}
% \end{APPENDIX}
%
%   or
%
% \begin{APPENDICES}
% \section{<Title of Section A>}
% \section{<Title of Section B>}
% etc
% \end{APPENDICES}

% Acknowledgments here

% References here (outcomment the appropriate case)

% CASE 1: BiBTeX used to constantly update the references
%   (while the paper is being written).
%\bibliographystyle{informs2014} % outcomment this and next line in Case 1
%\bibliography{<your bib file(s)>} % if more than one, comma separated

%\bibliographystyle{informs2014} % outcomment this and next line in Case 1
%\bibliography{sample} % if more than one, comma separated

% CASE 2: BiBTeX used to generate mypaper.bbl (to be further fine tuned)
%\input{mypaper.bbl} % outcomment this line in Case 2

%If you don't use BiBTex, you can manually itemize references as shown below.

%\bibliographystyle{nonumber}

\section*{Appendix 2: Algorithm Pseudocodes}
\label{sec:algorithms}

This appendix collects the pseudocodes for all seven algorithms developed in the main text. Algorithms~\ref{alg:vi-te} and~\ref{alg:value_iteration_avg} are the value iteration procedures for the two MDP formulations of Section~\ref{sec:setup} and require knowledge of the lifetime distribution $F$; they serve as benchmarks rather than data-driven policies. Algorithms~\ref{alg:ind_hoeffding} - \ref{alg:corr_bernstein} are the bandit-based online learning algorithms of Section~\ref{sec:bandit}, which operate without knowledge of $F$ and differ in whether arms are treated as independent or correlated and whether Hoeffding or Bernstein concentration inequalities are used to construct confidence bounds. Algorithm~\ref{alg:survival_renewal} is the Kaplan--Meier renewal algorithm of Section~\ref{sec:Ren+KM}, which explicitly estimates $F$ from censored operational data. Table~\ref{tab:complexity} of the main text summarises the time and space complexity of all seven algorithms.

\subsection*{A2.1 \quad Value Iteration for the Time-Elapsed MDP}

Algorithm~\ref{alg:vi-te} implements value iteration for the time-elapsed MDP of Section~\ref{sec:time-elapsed-mdp}. The state space has cardinality $K+1$, independent of fleet size $N$, making this procedure computationally inexpensive. The input requires the lifetime PMF $f$, from which the renewal function $M(\cdot)$ and renewal mass function $m(\cdot)$ are computed via the discrete renewal equation~(1). Convergence is monitored via the span seminorm $\mathrm{sp}(d^{(n)}) := \max_k d^{(n)}(k) - \min_k d^{(n)}(k)$ of successive iterates, which contracts to zero under the unichain condition established in Theorem~\ref{thm:te-vi-convergence}. Upon termination, the midpoint estimate $\hat{l}^*$ is $\varepsilon/2$-accurate and the extracted policy $\hat{\pi}^*$ is $\varepsilon/2$-optimal (Theorem~\ref{thm:te-vi-convergence}). The output threshold $k_s = k^*$ established in Theorem~\ref{thm:time-mdp} directly grounds the bandit formulation of Section~\ref{sec:bandit}.

\begin{algorithm}[h]
\caption{Value Iteration for the Time-Elapsed MDP\label{alg:vi-te}}
\KwIn{PMF $f$, parameters $K, N, c_f, c_b$, tolerance $\varepsilon > 0$}
\KwOut{$\hat{l}^*$, $w^{(n)}$, $\hat{\pi}^*$}
Compute $M(k)$ for $k = 1,\ldots,K$ via the renewal equation~\eqref{eqn:renewal}\;
Set $m(k) \gets M(k) - M(k-1)$ for $k = 1,\ldots,K$\;
Initialize $w^{(0)}(k) \gets 0$ for all $k \in \{0,1,\ldots,K\}$\;
\Repeat{$\mathrm{sp}(d^{(n)}) < \varepsilon$}{
  \For{$k = 1, \ldots, K-1$}{
    $u(k) \gets \min\bigl(c_fNm(k) + w^{(n)}(k+1),\ c_b + w^{(n)}(0)\bigr)$\;
  }
  $u(K) \gets c_b + w^{(n)}(0)$\;
  $w^{(n+1)}(k) \gets u(k) - u(0)$ for all $k$ \tcp*{Normalize at reference state}
  $d^{(n)}(k) \gets w^{(n+1)}(k) - w^{(n)}(k)$ for all $k$\;
}
$\hat{l}^* \gets u(0) + \tfrac{1}{2}\bigl(\min_k d^{(n)}(k) + \max_k d^{(n)}(k)\bigr)$\;
$\hat{\pi}^*(k) \gets \arg\min_{a \in \{0,1\}} Q_a(k)$ for all $k$, where $Q_0(k) = c_fNm(k) + w^{(n)}(k+1)$, $Q_1(k) = c_b + w^{(n)}(0)$\;
\end{algorithm}

\subsection*{A2.2\quad Value Iteration for the Age-Vector MDP}

Algorithm~\ref{alg:value_iteration_avg} implements value iteration for the age-vector MDP of Section~\ref{sec:MDP}. Unlike Algorithm~\ref{alg:vi-te}, the state space $\mathcal{S} = \{0,1,\ldots,K\}^N$ has cardinality $(K+1)^N$, so the procedure is tractable only for small fleet sizes $N$; in the numerical experiments of Section~5 we use $N=2$, giving $|\mathcal{S}| = 169$ states. The per-iteration cost is $O(N 2^N (K+1)^N)$ because computing $Q_0(\mathbf{s})$ at each state requires summing over all $2^N$ failure subsets. Convergence is again monitored via the span seminorm, and the termination guarantees of Theorem~\ref{thm:value_iter_convergence} apply. The output optimal average cost $\hat{g}^*$ and policy $\hat{\pi}^*$ serve as the gold-standard benchmark $g^*_{\mathrm{age}}$ against which all data-driven algorithms are evaluated in Section~\ref{sec:MDPvsDD_Comp}. By the symmetry established in Theorem~\ref{thm:mdp_structure}, the optimal policy depends only on the multiset of machine ages, reducing effective storage from $(K+1)^N$ to $\binom{K+N}{N}$ threshold values.

\begin{algorithm}[h]
\DontPrintSemicolon
\KwIn{CDF $F$, costs $(c_b,c_f)$, $N$, $K$, tolerance $\varepsilon>0$}
\KwOut{Optimal average cost $\widehat{g}$, value function $w^{(k)}$, policy $\hat\pi^*$}
\BlankLine
$w^{(0)}\equiv 0; \qquad k\leftarrow 0; \qquad \mathbf{s}_{\mathrm{ref}}\leftarrow\mathbf{0}$\;
\Repeat{$\mathrm{sp}(d^{(k)})<\varepsilon$}{
  \ForEach{$\mathbf{s}\in\mathcal{S}$}{
    $Q_0(\mathbf{s})\leftarrow c_f\sum \limits_{j}h(s^j{+}1)+
      \sum \limits_{A\subseteq[N]}\Bigl(\prod_{j\in A}h(s^j{+}1)
      \prod_{j\notin A}\bar{h}(s^j{+}1)\Bigr)w^{(k)}(\mathbf{s}^{[A]})$\;
    $v^{(k+1)}(\mathbf{s})\leftarrow\min\{Q_0(\mathbf{s}),\,c_b\}$\;
  }
  \lForEach{$\mathbf{s}\in\mathcal{S}$}{%
    $w^{(k+1)}(\mathbf{s})\leftarrow v^{(k+1)}(\mathbf{s})-v^{(k+1)}(\mathbf{s}_{\mathrm{ref}})$}
  $d^{(k)}\leftarrow w^{(k+1)}-w^{(k)}$;\;
  $\mathrm{sp}(d^{(k)})\leftarrow\max_{\mathbf{s}}\,d^{(k)}(\mathbf{s})-\min_{\mathbf{s}}\,d^{(k)}(\mathbf{s})$;\;
  $k\leftarrow k+1$\;
}
$\widehat{g}\leftarrow v^{(k)}(\mathbf{s}_{\mathrm{ref}})+
  \tfrac{1}{2}\left(\min_\mathbf{s} d^{(k)}(\mathbf{s})+\max_\mathbf{s} d^{(k)}(\mathbf{s})\right)$\;
\lForEach{$\mathbf{s}\in\mathcal{S}$}{$\hat\pi^*(\mathbf{s})\leftarrow
  \mathbf{1}[c_b\leq Q_0(\mathbf{s})]$ \hfill \texttt{// recompute $Q_0$ using $w^{(k)}$}}
\Return{$\widehat{g}; \qquad w^{(k)}; \qquad \hat\pi^*$}\;
\caption{Value Iteration for Average-Cost MDP}
\label{alg:value_iteration_avg}
\end{algorithm}

\subsection*{A2.3\quad Independent Hoeffding LCB}

Algorithm~\ref{alg:ind_hoeffding} is the simplest of the four bandit algorithms. Each arm $k \in [K]$ is treated as an independent learning problem, and the lower confidence bound is constructed from Hoeffding's inequality with exploration bonus $b\sqrt{2\log(t)/n_k(t)}$, where $b = c_f N$ is the half-range of the per-unit-time cost. The initialization phase pulls each arm once in order of increasing $k$ before switching to the LCB rule; this ensures every arm has at least one sample before confidence bounds are evaluated. The algorithm achieves $O((K-1)\log T)$ cumulative regret (Theorem~\ref{thm:indep_Hoeff}), matching the Lai--Robbins lower bound up to constants.

\begin{algorithm}[h]
\DontPrintSemicolon
\KwIn{$K$, $T$, $c_b$, $c_f$, $N$}\KwOut{$k_1,\ldots,k_T$}
\lForEach{$k\in[K]$}{$n_k\leftarrow 0; \qquad S_k\leftarrow 0$}
\For{$t=1$ \KwTo $T$}{
  \lIf{$\exists\,k:n_k=0$}{$k_t\leftarrow\min\{k:n_k=0\}$}
  \lElse{$k_t\leftarrow\argmin\limits_{k\in[K]}\left\{S_k/n_k - b\sqrt{2\log(t)/n_k}\right\}$}
  Observe total failures $Y_t$ over interval $k_t$\;
  $n_{k_t}\leftarrow n_{k_t}+1; \qquad S_{k_t}\leftarrow S_{k_t}+(c_b+c_f Y_t)/k_t$\;
}
\caption{Independent Hoeffding LCB}
\label{alg:ind_hoeffding}
\end{algorithm}

\subsection*{A2.4\quad Independent Bernstein LCB}

Algorithm~\ref{alg:ind_bernstein} replaces the Hoeffding exploration bonus with the empirical Bernstein bound of \citet{Audibert2009}, which substitutes the worst-case range $b^2$ with the sample variance $\hat{\sigma}^2_k(t)$. When $n_k(t) = 1$ the Hoeffding bonus is used as a fallback since the sample variance is undefined; for $n_k(t) \geq 2$ the Bernstein LCB of equation~(\ref{eq:bern-lcb-indep}) applies. The algorithm achieves the same $O((K-1)\log T)$ regret rate as Algorithm~\ref{alg:ind_hoeffding} (Theorem~\ref{thm:indep_Bern}) but with tighter instance-dependent constants when arm reward variances differ.

\begin{algorithm}[h]
\DontPrintSemicolon
\KwIn{$K$, $T$, $c_b$, $c_f$, $N$}\KwOut{$k_1,\ldots,k_T$}
\lForEach{$k\in[K]$}{$n_k\leftarrow 0; \qquad S_k\leftarrow 0; \qquad Q_k\leftarrow 0$}
\For{$t=1$ \KwTo $T$}{
  \lIf{$\exists\,k:n_k=0$}{$k_t\leftarrow\min\{k:n_k=0\}$}
  \Else{
    \lForEach{$k\in[K]$}{%
      $\widehat{c}_k\leftarrow S_k/n_k$;\;
      $\hat\sigma_k^2\leftarrow(Q_k - S_k^2/n_k)/(n_k-1)$ if $n_k\geq 2$, else $b^2$}
    $k_t\leftarrow\argmin\limits_{k\in[K]}\,\mathrm{LCB}_k^{IB}(t)$
      \tcp*{Eq.~\eqref{eq:bern-lcb-indep}; Hoeffding bound used when $n_k=1$}
  }
  Observe $Y_t; \quad c_t\leftarrow(c_b+c_f Y_t)/k_t; \quad n_{k_t}\leftarrow n_{k_t}+1; \quad S_{k_t}\leftarrow S_{k_t}+c_t; \quad Q_{k_t}\leftarrow Q_{k_t}+c_t^2$\;
}
\caption{Independent Bernstein LCB}
\label{alg:ind_bernstein}
\end{algorithm}

\subsection*{A2.5\quad Correlated Hoeffding LCB}

Algorithm~\ref{alg:corr_hoeffding} exploits the nested observation structure of Proposition~\ref{prop:nested_obs}: pulling arm $k$ can be used to infer the costs of all shorter intervals $j \leq k$ can be reconstructed at no additional cost. Consequently, the effective sample count for arm $j$ is $m_j(t) = \sum_{l=j}^K n_l(t) \geq n_j(t)$, and the LCB is computed using $m_j(t)$ in place of $n_j(t)$. The algorithm achieves $O((K-k^*)\log T)$ regret (Theorem~\ref{thm:corr_hoeff}), a strict improvement over the $O((K-1)\log T)$ bound of Algorithm~\ref{alg:ind_hoeffding} whenever $k^* > 1$, because arms $k < k^*$ require only $O(1)$ direct pulls.

\begin{algorithm}[h]
\DontPrintSemicolon
\KwIn{$K$, $T$, $c_b$, $c_f$, $N$}\KwOut{$k_1,\ldots,k_T$}
\lForEach{$k\in[K]$}{$m_k\leftarrow 0 \qquad S_k\leftarrow 0$}
\For{$t=1$ \KwTo $T$}{
  \lIf{$\exists\,k:m_k=0$}{$k_t\leftarrow\max\{k:m_k=0\}$}
  \lElse{$k_t\leftarrow\argmin\limits_{k\in[K]}\left\{S_k/m_k - b\sqrt{2\log(t)/m_k}\right\}$}
  Observe cumulative failures: $Z_j\leftarrow\sum \limits_{i=1}^j Y_i, \ \text{for } j=1,\ldots,k_t$\;
  \lForEach{$j=1,\ldots,k_t$}{%
    $m_j\leftarrow m_j+1 \qquad S_j\leftarrow S_j+(c_b+c_f Z_j)/j$}
}
\caption{Correlated Hoeffding LCB}
\label{alg:corr_hoeffding}
\end{algorithm}

\subsection*{A2.6\quad Correlated Bernstein LCB}

Algorithm~\ref{alg:corr_bernstein} combines the nested observation structure of Algorithm~\ref{alg:corr_hoeffding} with the empirical Bernstein confidence bound of Algorithm~\ref{alg:ind_bernstein}. The sample variance $\hat{\sigma}^{(m)2}_k(t)$ is now computed from the $m_k(t)$ correlated samples rather than the $n_k(t)$ direct pulls. The algorithm achieves $O((K-k^*)\log T)$ regret (Theorem~\ref{thm:corr_bern}), with the same improvement over independent arms as Algorithm~\ref{alg:corr_hoeffding} but potentially tighter constants due to variance adaptation. 

\begin{algorithm}[h]
\DontPrintSemicolon
\KwIn{$K$, $T$, $c_b$, $c_f$, $N$}\KwOut{$k_1,\ldots,k_T$}
\lForEach{$k\in[K]$}{$m_k\leftarrow 0 \qquad S_k\leftarrow 0 \qquad Q_k\leftarrow 0$}
\For{$t=1$ \KwTo $T$}{
  \lIf{$\exists\,k:m_k=0$}{$k_t\leftarrow\max\{k:m_k=0\}$}
  \Else{
    \lForEach{$k\in[K]$}{%
      $\hat\sigma_k^2\leftarrow(Q_k-S_k^2/m_k)/(m_k-1)$ if $m_k\geq 2$, else $b^2$}
    $k_t\leftarrow\argmin\limits_{k\in[K]}\,\mathrm{LCB}_k^{CB}(t)$
      \tcp*{From Eq.~\eqref{eq:bern-lcb}}
  }
  Set $C^{(j)}\leftarrow(c_b+c_f\sum \limits_{i=1}^j Y_i)/j$ for $j=1,\ldots,k_t$\;
  \lForEach{$j=1,\ldots,k_t$}{%
    $m_j\leftarrow m_j+1 \qquad S_j\leftarrow S_j+C^{(j)} \qquad Q_j\leftarrow Q_j+(C^{(j)})^2$}
}
\caption{Correlated Bernstein LCB}
\label{alg:corr_bernstein}
\end{algorithm}

\subsection*{A2.7\quad Kaplan--Meier Renewal Algorithm}

Algorithm~\ref{alg:survival_renewal} takes a fundamentally different approach from the bandit algorithms: rather than directly estimating the per-unit-time cost $c(k)$ for each arm, it estimates the lifetime distribution $F$ nonparametrically from the accumulated censored observations and uses the estimated $\hat{F}$ to compute $\hat{c}(k)$ via the renewal equation~\ref{eqn:renewal}. At each block replacement, every machine that has not failed contributes a right-censored observation, and every complete lifetime between failures contributes an uncensored observation; the Kaplan--Meier estimator $\hat{S}(t)$ of Theorem~\ref{thm:km_consistency} correctly incorporates both types. The algorithm has two hyperparameters: the exploration rate $\varepsilon$ governing the decaying exploration schedule $\varepsilon_t = \varepsilon/\sqrt{t}$, and the refit frequency $R$, which controls how often the KM estimator and renewal function are recomputed from the accumulated dataset $\mathcal{D}_T$. A cold-start phase ($t \leq 5$) ensures that all $K$ arms are pulled at least once before exploitation begins, providing the KM estimator with observations before the first policy update. Despite the absence of a formal regret guarantee, the algorithm identifies $k^*$ rapidly after the cold-start phase and thereafter incurs near-zero incremental regret, as established by Theorem~\ref{thm:kmConvergence} and confirmed in Section~\ref{sec:BanditKM_Simul}.

\begin{algorithm}[h]
\DontPrintSemicolon
\KwIn{$(c_b,c_f)$, $N$, $T_{\max}$, refit frequency $R$, $\varepsilon\in(0,1)$, $K$}
\KwOut{$\widehat{k}^*$, $\{\widehat{c}(k)\}_{k=1}^K$}
$\mathcal{D}\leftarrow\emptyset; \qquad t\leftarrow 0; \qquad \widehat{k}^*\leftarrow\mathrm{Uniform}([K])$\;
\While{$t<T_{\max}$}{
  $t\leftarrow t+1$\;
  \lIf{$t\leq 5$ \textbf{or} $\mathrm{Bernoulli}(\varepsilon/\sqrt{t})=1$}{%
    $k_t\leftarrow\mathrm{Uniform}([K])$}
  \lElse{$k_t\leftarrow\widehat{k}^*$}
  \For{$j=1$ \KwTo $N$}{
    $\tau\leftarrow 0$\;
    \While{$\tau<k_t$}{
      sample $\ell\sim F$\;
      \lIf{$\tau+\ell\leq k_t$}{$\mathcal{D}\leftarrow\mathcal{D}\cup\{(\ell,1)\}$;\;
        $\tau\leftarrow\tau+\ell$}
      \lElse{$\mathcal{D}\leftarrow\mathcal{D}\cup\{(k_t-\tau,\,0)\}$;\;
        \textbf{break}}
    }
  }
  \If{$t\bmod R=1$ \textbf{or} $t=6$}{
    Compute $\widehat{F}$ via KM on $\mathcal{D}; \quad \widehat{f}(s)\leftarrow\widehat{S}(s-1)-\widehat{S}(s); \quad \widehat{M}(0)\leftarrow 0$\;
    \lForEach{$s=1,\ldots,K$}{%
      $\widehat{M}(s)\leftarrow\sum \limits_{r=1}^{s}\widehat{f}(r)[1+\widehat{M}(s-r)]$}
    \lForEach{$k=1,\ldots,K$}{%
      $\widehat{c}(k)\leftarrow(c_b+c_f N\widehat{M}(k))/k$}
    $\widehat{k}^*\leftarrow\argmin\limits_{k\in[K]}\widehat{c}(k)$\;
  }
}
\Return{$\widehat{k}^*; \qquad \{\widehat{c}(k)\}_{k=1}^K$}\;
\caption{Kaplan--Meier Renewal Algorithm}
\label{alg:survival_renewal}
\end{algorithm}

\end{document}